\documentclass{article}
\usepackage{arxiv}

\usepackage{natbib}
\usepackage{subcaption}

\usepackage{amsmath, amsfonts}
\usepackage{amssymb}
\usepackage{mathtools}
\usepackage{amsthm}

\usepackage{mathabx}
\usepackage{bbm}
\usepackage{dsfont}
\usepackage{stmaryrd} 

\newcommand{\ie}{\textit{i}.\textit{e}., }
\newcommand{\eg}{\textit{e}.\textit{g}.\ }
\newcommand{\dfour}{\texttt{d4}}
\newcommand{\dfourpm}{\ensuremath{\dfour^{\pm}}}

\usepackage{tikz}
\usetikzlibrary{arrows.meta, positioning, calc}

\usepackage{booktabs}

\usepackage{url}

\theoremstyle{thmstyleone}%
\newtheorem{theorem}{Theorem}
\newtheorem*{claim*}{Claim}%
\newtheorem{corollary}[theorem]{Corollary}%
\newtheorem{lemma}[theorem]{Lemma}%

\theoremstyle{thmstyletwo}%
\newtheorem{example}{Example}%
\newtheorem{remark}{Remark}%

\theoremstyle{thmstylethree}%
\newtheorem{definition}{Definition}%

\usepackage[lighttt]{lmodern}
\usepackage{listings}
\usepackage{authblk} 

\title{Solving Hard XAI Queries Based on a Compiled Dual-Rail Encoding}

\author[1,2]{\thanks{corresponding author: arthur.ledaguenel@gmail.com} Arthur Ledaguenel}
\author[1]{Florent Capelli}
\author[1]{Jean-Marie Lagniez}

\affil[1]{CRIL, Université d'Artoi}
\affil[2]{LISIC, ULCO}

\begin{document}
\maketitle

\begin{abstract}
The widespread adoption of artificial intelligence (AI) within real-world applications has raised a lot of concerns regarding their trustworthiness, especially in critical applications. The field of eXplainable AI (XAI) has emerged with the objective of providing explanations to the users about the decisions made by AI systems. Several explanations for boolean classifiers have been introduced in the literature, including abductive and contrastive explanations, each giving a different insight on the decision of the classifier. However, computing an explanation for a decision of a boolean classifier is a hard problem in general. One way to deal with this complexity is to rely on a compiled representation of the classifier for which each explanation can be computed efficiently. Unfortunately, we prove in this paper that several classes of abductive explanations, remain hard to compute even for Ordered Binary Decision Diagrams, one of the most tractable subsets of the knowledge compilation map. Included in such classes are shorter abductive explanations or abductive explanations that include the explainee's preferences. To recover the benefits of working with compiled representations, we show that a proper representation of the dual-rail encoding of the classifier can be used to compute efficiently these classes of explanations.
\end{abstract}


\section{Introduction}
The widespread adoption of artificial intelligence (AI) within real-world applications, powered by the recent advancements of machine learning, has raised a lot of concerns regarding their trustworthiness, especially in critical applications. Building trust in AI systems depends on characteristics like fairness, robustness, and explainability, each of which has been the subject of active research in the past decade. Hence, the field of eXplainable AI (XAI) has emerged with the objective of providing explanations to the users about the decisions made by AI systems. Within this field, formal explainability \citep{ignatiev_abduction-based_2019} designs explanations with provable guarantees and studies the complexity of computing these explanations based on the type of model used in the AI system. On boolean classifiers specifically, several explanations have been introduced in the literature, each giving a different insight on the decision of the classifier. For example, abductive explanations highlight a set of features in the input instance whose values are sufficient to fix the decision of the classifier \citep{ignatiev_abduction-based_2019}, whereas contrastive explanations suggest a minimal set of features that can be changed in the input instance to swap the decision of the classifier \citep{audemard_computation_2024}. Other types of explanations such as the Shapley value have also been adapted to boolean classifiers \citep{letoffe_towards_2025}.

Computing an explanation for a decision of a boolean classifier is a hard problem in general. In fact, one of the main focus of the literature in formal explainability has been to characterize for which types of classifiers a given explanation can be computed efficiently (typically in time polynomial in the size of the classifier). This \textit{cartographic} approach of formal explainability draws a natural parallel with knowledge compilation \citep{darwiche_knowledge_2002}, a field whose goal is to characterize propositional languages representing boolean functions according to their succinctness and the set of tractable queries and transformations they support. Hence, the complexity of computing explanations based on a compiled representation of the classifier has been heavily explored recently \citep{audemard_tractable_2020,huang_tractable_2022}. It is known for instance that computing subset-minimal abductive explanations (also called sufficient reasons or prime implicant explanations) can be done efficiently for $\mathtt{d\text{-}DNNF}$ classifiers. Unfortunately, we prove in this paper that several classes of abductive explanations remain hard even for Ordered Binary Decision Diagrams ($\mathtt{OBDD}$), one of the most tractable subsets of the knowledge compilation map. This is for instance the case of shortest (or minimal-size) explanations \citep{audemard_deriving_2024} or optimal abductive explanations with respect to an additive utility function representing the explainee's preferences \citep{audemard_preferred_2022}. Algorithms based on SAT-like solvers have been proposed for these types of explanations, but they lack the guarantees that compiled representations can provide. The aim of this paper is precisely to fill that gap.

At the core of computing abductive explanations lies the complexity of reasoning about the implicants of a formula. The dual-rail encoding \citep{bryant_cosmos_1987} has already been used to generate or enumerate selected implicants of a formula \citep{manquinho_prime_1997,palopoli_algorithms_1999,jabbour_enumerating_2014,previti_prime_nodate}, but we believe that its full potential has not yet been explored. In particular, we show that a proper representation of the dual-rail encoding of a classifier can be used to efficiently compute several hard explanations, such as shortest abductive explanations, preferred abductive explanations or coverage-based explanations in the presence of domain constraints \citep{cooper_abductive_2023}. We also show that the dual-rail of a $\mathtt{CNF}$ can be easily represented as another $\mathtt{CNF}$, which allows us to use existing knowledge compilers \textit{as-is}. Finally, we measure experimentally the complexity gap between compiling the original function and its dual-rail.

\paragraph{Outline.} We first review related works in Section \ref{sec:related}. We define preliminary notions on boolean functions, knowledge compilation and XAI queries in Section \ref{sec:preliminaries}. We define the dual-rail encoding in \ref{sec:dual-rail} and demonstrate its potential for solving hard XAI queries in Section \ref{sec:hardxai}. We discuss how to compile dual-rail encodings into tractable representations in Section \ref{sec:compilation} and test this approach experimentally in Section \ref{sec:xp}. Finally, we conclude our paper and discuss future research directions in Section \ref{sec:conclusion}.

\section{Related works} \label{sec:related}
The complexity of computing abductive explanations based on the representation of the classifier has been studied in various contexts \citep{shih_symbolic_2018,ignatiev_abduction-based_2019,audemard_tractable_2020,audemard_computational_2021,huang_tractable_2022}. Selecting a preferred abductive explanation based on a model of the explainee's preferences was studied in \citep{audemard_preferred_2022,bounia_impact_2023,bounia_enhancing_2025}, with a focus on majority reasons and probabilistic explanations for random forests. The impact of domain constraints on explanations redundancy and computational complexity was highlighted in \citep{cooper_tractability_2021,gorji_sufficient_2022,cooper_abductive_2023,audemard_deriving_2024}.


The dual-rail encoding \citep{bryant_cosmos_1987} has already been used to generate or enumerate selected prime implicants of a formula \citep{palopoli_algorithms_1999,manquinho_prime_1997,jabbour_enumerating_2014,previti_prime_nodate}. However, dual-rail encodings are typically paired with SAT oracles or integer linear programming solvers rather than compiled into a tractable representation, as we explore in this paper. Moreover, we show that the range of queries they support go beyond what they have been used for so far, notably regarding XAI queries.

\section{Preliminaries} \label{sec:preliminaries}
For a positive integer $i$, we denote by $[i]$ the set of integers $\{1, . . . , i\}$. Given any set or vector $\mathbf{S}$ indexed by $[n]$ and a subset $I \subset [n]$ of indexes, we note $\mathbf{S}_I$ the set $\{S_i | i \in I\}$ the set of elements of index in $I$.

\paragraph{Terms and instances.} We assume a set of \textbf{boolean variables} $\mathbf{X}=\{X_1, ..., X_n\}$. A literal $l$ on a variable $X$ is either $x$ or $\overline{x}$. We also note $\overline{l}$ the literal opposite to $l$. A term is a set of consistent literals (it does not contain a literal and its opposite). The set of literals (resp. terms) on $\mathbf{X}$ is noted $\mathtt{lits}(\mathbf{X})$ (resp. $\mathtt{terms}(\mathbf{X})$). For any two terms $\mathbf{t}$ and $\mathbf{h}$, if $\mathbf{h} \subset \mathbf{t}$ we say that $\mathbf{h}$ is a sub-term of $\mathbf{t}$ or equivalently that $\mathbf{t}$ is an extension of $\mathbf{h}$. Two terms are consistent if they agree on mutual variables. An instance $\mathbf{x}$ is a term that contains one literal for each variable. We note $\mathtt{var}(\mathbf{t})$ the set of variables featured in $\mathbf{t}$. The set of all instances on $\mathbf{X}$ is noted $\mathcal{X} = \{0, 1\}^{\mathbf{X}} \simeq \{0, 1\}^n$. A instance $\mathbf{x} \in \mathcal{X}$ is equivalently viewed as a boolean vector or a set of literals such that $x_i=1$ (resp. $x_i=0$) iff $\mathbf{x}$ contains the literal $x_i$ (resp. $\overline{x_i}$). The size (resp. cardinality) of a term is its number of literals (resp. positive literals).

\paragraph{Boolean functions.} A \textbf{boolean function} on $\mathbf{X}$ is a function $f:\mathcal{X} \to \{0,1\}$. An instance $\mathbf{x} \in \mathcal{X}$ is called a model of $f$ iff $f(\mathbf{x})=1$. An \textbf{implicant} of $f$ is a term $\mathbf{t}$ such that any extension of $\mathbf{t}$ is a model of $f$. The set of models (resp. implicants) of $f$ is noted $\mathtt{mods}(f)$ (resp. $\mathtt{ips}(f)$). A boolean function is said consistent if it has at least one model and is said inconsistent otherwise. We also note $\mathbf{t} \models f$ to indicate that a term $\mathbf{t}$ is an implicant of $f$. An implicant is \textbf{prime} if none of its strict sub-terms is also an implicant. Given two boolean functions $f$ and $g$, we say that $f$ implies $g$, noted $f \models g$, iff any model of $f$ is a model of $g$ (\ie $\mathtt{mods}(f) \subset \mathtt{mods}(g)$). We follow standard definitions and notations for the negation, conjunction or disjunction of boolean functions using the symbols $\neg, \land$ and $\lor$. The conditioning of $f$ on a term $\mathbf{t}$ is noted $f|\mathbf{t}$.


\paragraph{Knowledge compilation.} A representation language $\mathtt{L}$ defines a syntax for writing boolean functions as strings of symbols. We provide below rudimentary definitions for boolean circuits and binary decision diagrams.

\begin{definition}[Boolean circuits]
    A \textbf{boolean circuit} $\phi(\mathbf{X})$ in Negation Normal Form ($\mathtt{NNF}$) \citep{darwiche_knowledge_2002} is a labeled directed acyclic graph where each leaf node is labeled with $\top, \bot$, a variable $X$ or its negation $\neg X$ and internal nodes are labeled with logical connectives $\land$ or $\lor$. For implicity we assume w.l.o.g. that each internal node $n$ has exactly two children noted $n_l$ and $n_r$. Its size $|\phi|$ is its number of edges. Semantics of boolean circuits follow the usual semantics of logical connectors $\neg, \land$ and $\lor$. A $\land$-node is \textbf{decomposable} if its children do not share variables. A $\mathtt{NNF}$ circuit is in Decomposable Negation Normal Form ($\mathtt{DNNF}$) iff all of its $\land$-nodes are decomposable. A $\lor$-node $u$ is \textbf{deterministic} if its children are inconsistent. A $\mathtt{DNNF}$ circuit is in deterministic Decomposable Negation Normal Form ($\mathtt{d\text{-}DNNF}$) iff all of its $\lor$-nodes are deterministic. A \textbf{vtree} $T$ over $\mathbf{X}$ is a full, rooted, binary tree whose leaves are in one-to-one correspondence with $\mathbf{X}$. For any node $t \in T$, we node $t_l$ and $t_r$ its two children. A $\mathtt{DNNF}$ $\phi$ respects a vtree $T$ iff for any $\land$-node $n$ in $\phi$ there is a node $t \in T$ such that $\mathtt{var}(n_l) \subset \mathtt{var}(t_l)$ and $\mathtt{var}(n_r) \subset \mathtt{var}(t_r)$. The class of structured $\mathtt{DNNF}$ (resp. $\mathtt{d\text{-}DNNF}$) is noted $\mathtt{SDNNF}$ (resp. $\mathtt{d\text{-}SDNNF}$).
\end{definition}

\begin{definition}[Ordered Binary Decision Diagrams]
    A \textbf{binary decision diagram} ($\mathtt{BDD}$) $\phi(\mathbf{X})$ is a labeled directed acyclic graph with two leaf nodes labeled $\top$ and $\bot$. Each internal node is labeled with a variable in $\mathbf{X}$ and has two outgoing edges labeled $0$ and $1$. A path $p$ in $\phi$ is an accepting path for an instance $\mathbf{x} \in \mathcal{X}$ if its last node is labeled $\top$ and for every internal node $n$ labeled with variable $X_i$ in $p$, the outgoing edge labeled $1$ (resp. $0$) is in $p$ if $x_i=1$ (resp. if $x_i=0$). A model of $\phi$ is an instance with at least one accepting path. A \textbf{free} binary decision diagram ($\mathtt{FBDD}$) is a $\mathtt{BDD}$ where any variable appears at most once in any path from the root to a leaf. An binary decision diagram respects an order $\sigma:\mathbf{X} \to [n]$ on variables iff for any two variables $X_i$ and $X_j$ such that $X_i$ precedes $X_j$ in a path of $\phi$, then $\sigma(X_i) <\sigma(X_j)$. An \textbf{ordered} binary decision diagram ($\mathtt{OBDD}$) is a $\mathtt{BDD}$ that respects any order. Notice that an $\mathtt{OBDD}$ is necessarily a $\mathtt{FBDD}$.
\end{definition}


Throughout the paper, a formula $\phi \in \mathtt{L}$ will be conflated with the boolean function it represents. Its size is noted $|\phi|$ and any notation introduced for boolean function is extended to formulas. We assume the reader is familiar with the standard subsets of Negation Normal Form (NNF) formulas listed in Table \ref{tab:languages} and refer to \citep{darwiche_knowledge_2002,amarilli_connecting_2020} or supplementary materials for more details. Knowledge compilation \citep{darwiche_knowledge_2002} provides a strategy to deal with the intractability of propositional reasoning: a formula in a general representation language (\eg $\mathtt{CNF}$) is compiled into a representation in a tractable language (\eg $\mathtt{OBDD}$) for which desired queries can be answered in polynomial time. Therefore, given any concrete representation language $\mathtt{L}$ for boolean functions, one is interested in the list of queries or transformations that can be performed on a formula $\phi \in \mathtt{L}$ in time polynomial in $|\phi|$. Standard queries include deciding if the formula is consistent or valid (all instances are models), deciding if the formula implies a given clause or is implied by a given term, counting or enumerating the models of the formula. Standard transformations include conditioning the formula on a set of literals, computing the conjunction or disjunction of two formulas in $\mathtt{L}$.

\begin{table}[h!]
\centering
\begin{tabular}{l  l}
    \toprule
    \textbf{Acronym} & \textbf{Name}\\
    \midrule
    $\mathtt{CNF}$ & Conjunctive Normal Form \\
    $\mathtt{d\text{-}DNNF}$ & Deterministic Decomposable NNF \\
    $\mathtt{dec\text{-}DNNF}$ & Decision Decomposable NNF \\
    $\mathtt{OBDD}$ & Ordered Binary Decision Diagram \\
    \bottomrule
\end{tabular}
\caption{Standard representation languages.}
\label{tab:languages}
\end{table}

\paragraph{XAI queries.} All XAI queries considered in this paper are build around the concept of an abductive explanation.
\begin{definition}
    Let $f \in \mathcal{F}_n$ and $\mathbf{x} \in \mathcal{X}_n$ such that $f(\mathbf{x})=1$ (resp. $f(\mathbf{x})=0$), an \textbf{abductive explanation} for the decision $(\mathbf{x}, f)$ is an implicant of $f$ (resp. $\neg f$) that is consistent with $\mathbf{x}$. The set of abductive explanations for a model $\mathbf{x}$ of $f$ is noted $\mathtt{axp}(f, \mathbf{x})$.
\end{definition}
To avoid the redundancy of abductive explanations, one can select minimal abductive explanations. The minimality criterion can either be a general minimality principle (such as size or subset minimality) or can be defined to model the preferences the explainee. A \textbf{sufficient reason} (or prime-implicant explanation) is a subset-minimal abductive explanation. A \textbf{shortest reason} is an abductive explanation that has minimal size. Notice that a shortest reason is necessarily a sufficient reason.

Preferences of the explainee can be represented both quantitatively, as an additive dis-utility function over variables, or qualitatively, as a stratification of variables from the least useful to the most useful.
\begin{definition}
    Let $\mathbf{w} \in (\mathbb{R}^+)^n$ be a vector of weights on variables, the dis-utility of an explanation $\mathbf{t}$ is the sum of the weights over variables featured in the explanation, \ie $w(\mathbf{t}) = \sum_{X_i \in \mathtt{var}(\mathbf{t})} w_i$. An \textbf{optimal explanation} with respect to $\mathbf{w}$ is one that minimizes its dis-utility.
\end{definition}

\begin{definition}    
    Let $\mathbf{P}=(\mathbf{P}_i)_{i \in [p]}$ form a stratified partition of $\mathbf{X}$, an explanation $\mathbf{t}_1$ is shorter to an explanation $\mathbf{t}_2$ with respect to $\mathbf{P}$, noted $\mathbf{t}_1 \prec_{\mathbf{P}} \mathbf{t}_2$, iff its stratified size is shorter following a lexicographic order, \ie:
    \begin{equation}
        \begin{split}
            \exists i \in [p], \quad & |\mathtt{var}(\mathbf{t}_1) \cap \mathbf{P}_i| < |\mathtt{var}(\mathbf{t}_2) \cap \mathbf{P}_i| \\
            \forall j < i, \quad & |\mathtt{var}(\mathbf{t}_1) \cap \mathbf{P}_j| = |\mathtt{var}(\mathbf{t}_2) \cap \mathbf{P}_j|
        \end{split}
    \end{equation}
    A \textbf{shortest explanation} with respect to a \textbf{stratification} $\mathbf{P}=(\mathbf{P}_i)_{i \in [p]}$ is one that is minimal for $\prec_{\mathbf{P}}$.
\end{definition}

\begin{remark}
    A subset-minimal explanation based on stratified preferences is defined in \citep{audemard_preferred_2022}.
\end{remark}

\begin{example} \label{ex:1}
    Let $\phi = (X_1 \lor \neg X_2) \land (X_1 \lor X_3)$ be a propositional formula in $\mathtt{CNF}$. Instance $\mathbf{x}=x_1 \overline{x_2} x_3$ is one of the $5$ models of $\phi$. The set of abductive explanations for $(\mathbf{x}, \phi)$ is $\mathtt{axp}(\phi, \mathbf{x})=\{x_1, \overline{x_2} x_3, x_1 \overline{x_2} x_3\}$, among which two are sufficient reasons (\ie $\mathtt{srs}(\phi, \mathbf{x})=\{x_1, \overline{x_2} x_3\}$) and only one is a shortest reason (\ie $\{x_1\}$). The optimal explanation with respect to dis-utility scores $\mathbf{w}=(3, 2, 2)$ is $\{x_1\}$ while the shorter explanation with respect to stratified preferences $\mathbf{P}=(\{x_1\}, \{x_2 x_3\})$ is $\{\overline{x_2} x_3\}$.
\end{example}

\begin{table*}[h!]
\centering
\begin{tabular}{l  l}
    \toprule
    \textbf{Acronym} & \textbf{Description}\\
    \midrule
    \multicolumn{1}{l}{\textbf{Optimization}} & \\
    $\mathtt{MWM}$ & Finds a minimum weight model \\
    $\mathtt{MPE}$ & Finds a most probable model \\
    $\mathtt{SSM}$ & Finds a shortest model with respect to stratified preferences \\
    $\mathtt{ST}$ & Finds a shortest term in any predicate $\mathtt{pred}$ (trivial for $\mathtt{mods}$) \\
    \midrule
    \multicolumn{1}{l}{\textbf{Counting}} & \\
    $\mathtt{CT}$ & Counts the models of $\phi$ \\
    $\mathtt{CTC}$ & Counts the models of $\phi$ of cardinality $k$ for any $0 \leq k \leq n$ \\
    \midrule
    \multicolumn{1}{l}{\textbf{Enumeration}} & \\
    $\mathtt{Enum}$ & Enumerate the models of $\phi$ \\
    \midrule
    \multicolumn{1}{l}{\textbf{Transformations}} & \\
    $\mathtt{CD}$ & Conditions $\phi$ on a given term \\
    $\mathtt{BC}\land$ & Computes the conjunction of two formulas $\phi$ and $\varphi$ \\
    
    \bottomrule
\end{tabular}
\caption{Acronym of queries and their descriptions.}
\label{tab:queries}
\end{table*}

\paragraph{Algebraic model counting.} Algebraic model counting is an abstract framework introduced in \cite{kimmig_algebraic_2017} that generalizes many counting and optimization queries on boolean functions based on a sum of products computation over models with suitable operators from a semiring structure.

\begin{definition}
    A (commutative) \textbf{semiring} is a structure $(\mathcal{A}, \oplus, \otimes, e^{\oplus}, e^{\otimes})$ where:
    \begin{itemize}
        \item addition $\oplus$ is an associative and commutative binary operation over $\mathcal{A}$
        \item multiplication $\otimes$ is an associative binary operation over $\mathcal{A}$
        \item $\otimes$ distributes over $\oplus$
        \item $e^{\oplus}$ is the neutral element of $\oplus$
        \item $e^{\otimes}$ is the neutral element of $\otimes$
        \item $e^{\oplus}$ is an annihilator for $\otimes$
    \end{itemize}
    In a commutative semiring $\otimes$ is also commutative. An optimization semiring is a semiring where $\oplus$ is either $\max$ or $\min$.
\end{definition}

Given a formula $\phi$, a commutative semiring $\mathcal{SR}=(\mathcal{A}, \oplus, \otimes, e^{\oplus}, e^{\otimes})$ and an attribution function $\alpha: \mathtt{lits}(\mathbf{X}) \to \mathcal{A}$, the solution to the algebraic model counting query $(\phi, \mathcal{SR}, \alpha)$ is:
\begin{equation} \label{eq:amc}
    \alpha(\phi) = \bigoplus_{\mathbf{x} \in \mathtt{mods}(\phi)} \bigotimes_{l \in \mathbf{x}} \alpha(l)
\end{equation}

Fixing a specific semiring yields a specific query, which can be easier than general algebraic model counting. As we prove below, all counting and optimization problems described in Table \ref{tab:queries} can be derived from algebraic model counting by choosing a specific semiring.

\begin{itemize}
    \item \textbf{Model counting} ($\mathtt{CT}$): counts the models of $\phi$.

        \begin{itemize}
            \item Semiring $(\mathbb{N}, +, \times, 0, 1)$
            \item Attribution function: $\forall l\in \mathtt{lits}(\mathbf{X)}, \alpha(l)=1$
        \end{itemize}
    
        The associativity and commutativity of $+$ and $\times$ are trivial, as for the distributivity of $\times$ over $+$.
        
        \begin{claim*}
            $$\alpha(\phi) = \sum_{\mathbf{x} \in \mathtt{mods}(\phi)} \prod_{l \in \mathbf{x}} \alpha(l) = |\mathtt{mods}(\phi)| $$
        \end{claim*}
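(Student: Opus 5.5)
The claim follows directly from Equation~\eqref{eq:amc} once the counting semiring $(\mathbb{N}, +, \times, 0, 1)$ and the constant attribution $\alpha(l)=1$ are plugged in. The plan is to evaluate the inner product first, then the outer sum, and to note that the semiring operations are the ordinary ones on $\mathbb{N}$, so no algebraic subtlety arises.

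First, fix a model $\mathbf{x} \in \mathtt{mods}(\phi)$. Viewed as a term, $\mathbf{x}$ is a finite set of literals, exactly one per variable, so it has $n$ literals. The inner expression is $\prod_{l \in \mathbf{x}} \alpha(l) = \prod_{l \in \mathbf{x}} 1 = 1$. This holds because a finite product of copies of the multiplicative neutral element $e^{\otimes}=1$ equals $1$, which follows by induction on the number of factors using associativity and commutativity of $\times$. The exact number of literals in $\mathbf{x}$ is irrelevant here; we only need it to be finite.

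Second, substitute this into the outer sum:
\begin{equation*}
\alpha(\phi) = \sum_{\mathbf{x} \in \mathtt{mods}(\phi)} 1 = |\mathtt{mods}(\phi)|.
\end{equation*}
The second equality holds because summing the constant $1$ over a finite index set gives that set's cardinality. The set $\mathtt{mods}(\phi) \subset \mathcal{X}$ is finite since $|\mathcal{X}| = 2^n$. We also cover the inconsistent case: if $\phi$ has no models, the empty sum is $e^{\oplus}=0 = |\emptyset|$, so the identity still holds.

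There is no real obstacle. The only point needing care is the well-definedness of the unordered $\bigoplus$ and $\bigotimes$ over finite sets. This is guaranteed by the associativity and commutativity of $+$ and $\times$ on $\mathbb{N}$, which the paper notes just before the claim, together with the convention that empty sums and products equal the respective neutral elements.
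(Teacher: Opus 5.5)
Your proposal is correct and matches the paper's proof: you show that each product $\prod_{l \in \mathbf{x}} \alpha(l)$ equals $1$, then sum this constant over $\mathtt{mods}(\phi)$ to obtain its cardinality. Your extra remarks on finiteness and the empty-sum case are sound details that the paper leaves implicit.
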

        
        \begin{proof}
            For any instance $\mathbf{x} \in \mathcal{X}$, we have:
                \begin{equation}
                    \prod_{l \in \mathbf{x}} \alpha(l) = \prod_{l \in \mathbf{x}} 1 = 1
                \end{equation}
                Therefore:
                \begin{equation}
                    \begin{split}
                        \alpha(\phi) & = \sum_{\mathbf{x} \in \mathtt{mods}(\phi)} \prod_{l \in \mathbf{x}} \alpha(l) \\
                        & = \sum_{\mathbf{x} \in \mathtt{mods}(\phi)} 1 \\
                        & = |\mathtt{mods}(\phi)|
                    \end{split}
                \end{equation}
        \end{proof}

    \item \textbf{Model counting by cardinality} ($\mathtt{CTC}$): counts the models of $\phi$ of cardinality $k$ for any $0 \leq k \leq n$
        \begin{itemize}
            \item Semiring $(\mathbb{N}^{n+1}, +^{n+1}, \bowtie^{n+1}, \mathbf{0}^{n+1}, \delta^0)$ with:
                \begin{equation*}
                    \begin{split}
                        \forall 0 \leq k \leq n, \quad & +^{n+1}(\mathbf{a}, \mathbf{b})_k = a_k + b_k \\
                        & \bowtie^{n+1}(\mathbf{a}, \mathbf{b})_k = \sum_{0 \leq i \leq k} a_i \times b_{k-i} \\
                        \forall 0 \leq i \leq n, \quad & \delta^k_i = 1 \mbox{ if } k=i \mbox{ and } 0 \mbox{ otherwise}
                    \end{split}
                \end{equation*}
            \item Attribution function:
            \begin{equation}
                \begin{split}
                    \forall i \in [n], \quad & \alpha(x_i)= \delta^1 \\
                    & \alpha(\overline{x_i})= \delta^0
                \end{split}
            \end{equation}
        \end{itemize}
    
        The associativity and commutativity of $+^{n+1}$ are trivial, as for the commutativity of $\bowtie^{n+1}$.
        
        \begin{claim*}
            $\bowtie^{n+1}$ is associative.
        \end{claim*}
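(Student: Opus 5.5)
The plan is a direct computation: expand both $\bowtie^{n+1}(\bowtie^{n+1}(\mathbf{a},\mathbf{b}),\mathbf{c})_k$ and $\bowtie^{n+1}(\mathbf{a},\bowtie^{n+1}(\mathbf{b},\mathbf{c}))_k$ for an arbitrary index $0 \leq k \leq n$ and arbitrary $\mathbf{a},\mathbf{b},\mathbf{c} \in \mathbb{N}^{n+1}$, and show that both equal the symmetric expression
\begin{equation*}
    \sum_{\substack{i,j,l \geq 0 \\ i+j+l = k}} a_i \times b_j \times c_l .
\end{equation*}
Since $\bowtie^{n+1}$ is the truncated Cauchy product, this is the usual associativity of polynomial multiplication restricted to degrees at most $n$.

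For the left-hand side, unfold the definition twice:
\begin{equation*}
    \bowtie^{n+1}(\bowtie^{n+1}(\mathbf{a},\mathbf{b}),\mathbf{c})_k = \sum_{0 \leq m \leq k} \Big( \sum_{0 \leq i \leq m} a_i \times b_{m-i} \Big) \times c_{k-m}.
\end{equation*}
Distribute $\times$ over $+$ in $\mathbb{N}$. Then reindex with $j = m-i$ and $l = k-m$. The map $(i,m) \mapsto (i, m-i, k-m)$ is a bijection from $\{(i,m) : 0 \leq i \leq m \leq k\}$ onto the set of triples of nonnegative integers summing to $k$, so the left-hand side equals the triple sum above.

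For the right-hand side, proceed symmetrically:
\begin{equation*}
    \bowtie^{n+1}(\mathbf{a},\bowtie^{n+1}(\mathbf{b},\mathbf{c}))_k = \sum_{0 \leq i \leq k} a_i \times \sum_{0 \leq j \leq k-i} b_j \times c_{k-i-j}.
\end{equation*}
Setting $l = k-i-j$ identifies the index set $\{(i,j) : i \geq 0,\ j \geq 0,\ i+j \leq k\}$ with the same set of triples. Commutativity and associativity of $+$ and $\times$ on $\mathbb{N}$ then give equality of the two sides.

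The only point needing care is the index bookkeeping: checking that both reindexings really are bijections onto the same set of triples. Truncation at $n$ causes no difficulty. Every index appearing in the sums is at most $k \leq n$, so all entries used are legitimate coordinates of vectors in $\mathbb{N}^{n+1}$. Moreover, component $k$ of a product depends only on components $\leq k$ of its factors, so no information lost by truncation is ever needed.
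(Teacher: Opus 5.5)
Your proof is correct and is essentially the paper's: both unfold the nested truncated Cauchy products and reindex the resulting double sum, the paper by exchanging the order of summation and shifting the inner index, you by routing both sides through the symmetric sum $\sum_{i+j+l=k} a_i b_j c_l$. Your version makes the index bookkeeping more transparent, and the paper's displayed chain in fact contains index slips: it writes $c_{k+j-i}$ and then $c_{k-i}$ where $c_{k-i}$ and $c_{k-j-i}$ are meant. Your remark that truncation is harmless, because component $k$ only uses components of index at most $k$, is a point the paper leaves implicit.
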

        
        \begin{proof}
        For any $\mathbf{a}, \mathbf{b} \in \mathbb{N}^{n+1}$ and $0 \leq k \leq n$:
            \begin{equation*}
                \begin{split}
                    \bowtie^{n+1}(\bowtie^{n+1}(\mathbf{a}, \mathbf{b}), \mathbf{c})_k & = \sum_{0 \leq i \leq k} \bowtie^{n+1}(\mathbf{a}, \mathbf{b})_i \times c_{k-i} \\
                    & = \sum_{0 \leq i \leq k} (\sum_{0 \leq j \leq i} a_j \times b_{i-j}) \times c_{k-i} \\
                    & = \sum_{0 \leq j \leq k} a_j \times (\sum_{j \leq i \leq k} b_{i-j} \times c_{k+j-i}) \\
                    & = \sum_{0 \leq j \leq k} a_j \times (\sum_{0 \leq i \leq k-j} b_{i} \times c_{k-i}) \\
                    & = \sum_{0 \leq j \leq k} a_j \times \bowtie^{n+1}(\mathbf{b}, \mathbf{c})_{k-j} \\
                    & = \bowtie^{n+1}(\mathbf{a}, \bowtie^{n+1}(\mathbf{b}, \mathbf{c}))_k
                \end{split}
            \end{equation*}
            Therefore:
            $$ \bowtie^{n+1}(\bowtie^{n+1}(\mathbf{a}, \mathbf{b}), \mathbf{c}) = \bowtie^{n+1}(\mathbf{a}, \bowtie^{n+1}(\mathbf{b}, \mathbf{c}))$$
        \end{proof}
        
        \begin{claim*}
            $\bowtie^{n+1}$ distributes over $+^{n+1}$.
        \end{claim*}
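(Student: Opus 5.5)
We need to show that for all $\mathbf{a}, \mathbf{b}, \mathbf{c} \in \mathbb{N}^{n+1}$,
$$\bowtie^{n+1}(\mathbf{a}, +^{n+1}(\mathbf{b}, \mathbf{c})) = +^{n+1}(\bowtie^{n+1}(\mathbf{a}, \mathbf{b}), \bowtie^{n+1}(\mathbf{a}, \mathbf{c})).$$
Since $\bowtie^{n+1}$ was already noted to be commutative, right distributivity follows from left distributivity, so this single identity suffices. I would check it componentwise, exactly as in the associativity claim above. Fix an index $0 \leq k \leq n$ and unfold the definitions.

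The left-hand side gives $\sum_{0 \leq i \leq k} a_i \times (b_{k-i} + c_{k-i})$. By distributivity of $\times$ over $+$ in $\mathbb{N}$ this equals $\sum_{0 \leq i \leq k} (a_i \times b_{k-i} + a_i \times c_{k-i})$. Using associativity and commutativity of $+$ in $\mathbb{N}$, split this finite sum into $\sum_{0 \leq i \leq k} a_i \times b_{k-i} + \sum_{0 \leq i \leq k} a_i \times c_{k-i}$. That is $\bowtie^{n+1}(\mathbf{a}, \mathbf{b})_k + \bowtie^{n+1}(\mathbf{a}, \mathbf{c})_k$, which is the $k$-th component of the right-hand side.

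Since $k$ was arbitrary, the two vectors agree in every component, which proves the claim. There is no real obstacle. The only point needing care is that the truncation to indices $0 \leq k \leq n$ is the same on both sides: each component $k$ only involves entries of index at most $k$, so no terms are lost on either side.
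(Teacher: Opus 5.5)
Your proposal is correct and follows essentially the same componentwise argument as the paper: unfold $\bowtie^{n+1}$ at index $k$, distribute $a_i$ over $b_{k-i}+c_{k-i}$, and split the finite sum into $\bowtie^{n+1}(\mathbf{a},\mathbf{b})_k + \bowtie^{n+1}(\mathbf{a},\mathbf{c})_k$. Your added remarks, that commutativity gives right distributivity and that the truncation at $n$ is harmless, are correct and make explicit points the paper leaves implicit.
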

        
        \begin{proof}
        For any $\mathbf{a}, \mathbf{b}, \mathbf{c} \in \mathbb{N}^{n+1}$ and $0 \leq k \leq n$:
        \begin{equation*}
        \begin{split}
            \bowtie^{n+1}(\mathbf{a}, +^{n+1}(\mathbf{b}, \mathbf{c}))_k & = \sum_{0 \leq i \leq k} a_i \times +^{n+1}(\mathbf{b}, \mathbf{c})_{k-i} \\
            & = \sum_{0 \leq i \leq k} a_i \times (b_{k-i} + c_{k-i}) \\
            & = \sum_{0 \leq i \leq k} a_i \times b_{k-i} + \sum_{0 \leq i \leq k} a_i \times c_{k-i} \\
            & = \bowtie^{n+1}(\mathbf{a}, \mathbf{b})_k + \bowtie^{n+1}(\mathbf{a}, \mathbf{c})_k \\
            & = +^{n+1}(\bowtie^{n+1}(\mathbf{a}, \mathbf{b}), \bowtie^{n+1}(\mathbf{a}, \mathbf{c}))_k
        \end{split}
        \end{equation*}
        Therefore:
        $$ \bowtie^{n+1}(\mathbf{a}, +^{n+1}(\mathbf{b}, \mathbf{c})) = +^{n+1}(\bowtie^{n+1}(\mathbf{a}, \mathbf{b}), \bowtie^{n+1}(\mathbf{a}, \mathbf{c})) $$
    
        In other words: $\bowtie^{n+1}$ distributes over $+^{n+1}$.
    \end{proof}
    
    \begin{claim*}
        For any $0 \leq k \leq n$,
        $$\alpha(\phi)_k = |\mathtt{mods}_k(\phi)|$$
        where $\mathtt{mods}_k(\phi)$ is the set of models of $\phi$ of cardinality $k$.
    \end{claim*}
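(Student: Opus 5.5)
The plan is to show that for every instance $\mathbf{x}$, the product $\bigotimes_{l \in \mathbf{x}} \alpha(l)$ taken with $\bowtie^{n+1}$ equals $\delta^{|\mathbf{x}|^+}$, where $|\mathbf{x}|^+$ is the cardinality (number of positive literals) of $\mathbf{x}$. Summing these indicator vectors over the models of $\phi$ with $+^{n+1}$ then gives the result directly, as in the claim for $\mathtt{CT}$.

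The key auxiliary fact is a convolution identity: for $0 \leq a, b$ with $a+b \leq n$,
\begin{equation*}
    \bowtie^{n+1}(\delta^a, \delta^b) = \delta^{a+b}.
\end{equation*}
To check it, fix $k$. Then $\bowtie^{n+1}(\delta^a, \delta^b)_k = \sum_{0 \leq i \leq k} \delta^a_i \, \delta^b_{k-i}$. A summand is nonzero only if $i = a$ and $k - i = b$, that is $k = a+b$, and in that case it equals $1$. Hence the $k$-th coordinate is $1$ exactly when $k = a+b$, and $0$ otherwise. In particular $\delta^0$ is the neutral element, which is consistent with the semiring definition.

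The per-instance computation goes by induction on the number of literals multiplied. Because $\bowtie^{n+1}$ is associative and commutative (both established above), the product over the set $\mathbf{x}$ is well defined. Each positive literal contributes $\delta^1$ and each negative literal contributes $\delta^0$. Applying the identity repeatedly shows that a partial product over $m$ literals, of which $p$ are positive, is $\delta^p$. The indices never exceed $n$, since $p \leq |\mathbf{x}| = n$, so no truncation of coordinates interferes. Taking all of $\mathbf{x}$ gives $\delta^{|\mathbf{x}|^+}$.

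Finally, coordinatewise,
\begin{equation*}
    \alpha(\phi)_k = \sum_{\mathbf{x} \in \mathtt{mods}(\phi)} \delta^{|\mathbf{x}|^+}_k = \left|\{\mathbf{x} \in \mathtt{mods}(\phi) : |\mathbf{x}|^+ = k\}\right| = |\mathtt{mods}_k(\phi)|.
\end{equation*}
The only point needing care is the truncation to $n+1$ coordinates. Cardinalities of instances are bounded by $n$, so the truncation is harmless; the rest is routine.
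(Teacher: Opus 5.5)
Your proposal is correct and follows the paper's proof essentially step by step: you establish $\bowtie^{n+1}(\delta^i,\delta^j)=\delta^{i+j}$, deduce that the product over an instance is $\delta^{k}$ with $k$ its cardinality, and sum coordinatewise over the models. Your explicit remark that indices never exceed $n$, so the truncation to $n+1$ coordinates is harmless, is a small point of care that the paper leaves implicit.
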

    
    \begin{proof}
        First, let us show that for any $0 \leq i,j \leq n$, we have $\bowtie^{n+1}(\delta^i, \delta^j) = \delta^{i+j}$.
    
        For any $0 \leq k \leq n$
            \begin{equation}
                \bowtie^{n+1}(\delta^i, \delta^j)_k = \sum_{0 \leq m \leq k} \delta^i_m \times \delta^j_{k-m}
            \end{equation}
        Notice that $\delta^i_m \times \delta^j_{k-m}$ is equal to $1$ iff $m=i$ and $k-m=j$. This happens only once if $k=i+j$ in $m=i$ and cannot happen if $k \ne i+j$. Thus $\bowtie^{n+1}(\delta^i, \delta^j) = \delta^{i+j}$.
            
        In particular, for any instance $\mathbf{x} \in \mathcal{X}$, every positive literal in $\mathbf{x}$ mapped to $\delta^1$ increases the $\bowtie^{n+1}$-product by $1$ and negative literals mapped to $\delta^0$ do not impact the $\bowtie^{n+1}$-product. Hence, $\bowtie^{n+1}_{l \in \mathbf{x}} \alpha(l) = \delta^k$ where $k$ is the cardinality of $\mathbf{x}$.
            
        Therefore, for any $0 \leq k \leq n$:
            \begin{equation}
                \begin{split}
                    \alpha(\phi)_k & = (+^{n+1}_{\mathbf{x} \in \mathtt{mods}(\phi)} \bowtie^{n+1}_{l \in \mathbf{x}} \alpha(l))_k \\
                    & = (+^{n+1}_{\mathbf{x} \in \mathtt{mods}(\phi)} \delta^{|\mathbf{x}|})_k \\
                    & = \sum_{\mathbf{x} \in \mathtt{mods}(\phi)} \delta^{|\mathbf{x}|}_k = \sum_{\mathbf{x} \in \mathtt{mods}(\phi)} \mathds{1}(|\mathbf{x}|=k) \\
                    & = |\mathtt{mods}_k(\phi)|
                \end{split}
            \end{equation}
            where $\mathds{1}$ is the indicator function that returns $1$ when the input is true and $0$ otherwise.
    \end{proof}
    
    \item \textbf{Minimum Weight Model} ($\mathtt{MWM}$): finds a minimum weight model.
        \begin{itemize}
            \item Semiring $(\mathbb{R} \cup \{\infty\}, \min, +, \infty, 0)$ where $\min(\infty, z) = z$ and $+(\infty, z) = \infty$ for any $z \in \mathbb{R}$
            \item Attribution function: 
            $$\forall i \in [n], \alpha(x_i)= 1-\alpha(\overline{x_i}) \in [0,1]$$
        \end{itemize}
    
        The associativity and commutativity of $\min$ and $+$ are trivial, as for the distributivity of $+$ over $\min$. It comes directly from the definition of the semiring that:
    $$ \alpha(\phi) = \min_{\mathbf{x} \in \mathtt{mods}(\phi)} ( \sum_{l \in \mathbf{x}} \alpha(l))$$
    
    \item \textbf{Most Probable Explanation} ($\mathtt{MPE}$): finds a most probable model.
        \begin{itemize}
            \item Semiring $([0,1], \max, \times, 0, 1)$
            \item Attribution function: 
            $$\forall i \in [n], \alpha(x_i)= 1-\alpha(\overline{x_i}) \in [0,1]$$
        \end{itemize}
    
    The associativity and commutativity of $\max$ and $\times$ are trivial, as for the distributivity of $\times$ over $\max$. If we note $p_i=\alpha(x_i)$ for all $i \in [n]$, it comes directly that:
    $$ \alpha(\phi) = \max_{\mathbf{x} \in \mathtt{mods}(\phi)} ( \prod_{x_i \in \mathbf{x}} p_i \prod_{\overline{x_i} \in \mathbf{x}} (1-p_i) ) $$
    
    \item \textbf{Stratified Shortest Model} ($\mathtt{SSM}$): finds a shortest model with respect to stratified preferences defined by an ordered partition $\mathbf{P} = (\mathbf{P}_1, ... \mathbf{P}_p)$.
            \begin{itemize}
                \item Semiring $(\times_{k \in [p]} \{0,..,m_k\}, \min_{\mathtt{lex}}, +^p, \mathbf{m}, \mathbf{0}^p)$ with:
                \begin{equation*}
                        \begin{split}
                        \mathbf{a} \leq_{\mathtt{lex}} \mathbf{b} \iff & \exists k \in [p], a_k \leq b_k \\
                        \mbox{ and }  & \forall i < k, a_i=b_i \\
                        \end{split}
                    \end{equation*}
                    \begin{equation*}
                        \begin{split}
                        \forall k \in [p], & +^p(\mathbf{a}, \mathbf{b})_k = a_k + b_k \\
                        & m_k = |S_k|
                        \end{split}
                    \end{equation*}
        
                \item Attribution function: $\forall i \in [n], \alpha(x_i)= \alpha(\overline{x_i}) = \delta^k$ iff $X_i \in \mathbf{P}_k$
            \end{itemize}
        
        The associativity and commutativity of $\min_{\mathtt{lex}}$ and $+^p$ are trivial, as for the distributivity of $+^p$ over $\min_{\mathtt{lex}}$. It comes directly from the definition of the semiring that query outputs a preferred with respect to the the stratified preferences defined by the ordered partition $\mathbf{P} = (\mathbf{P}_1, ... \mathbf{P}_p)$.

    \item \textbf{Shortest Term} ($\mathtt{ST}$): Finds a shortest term in any predicate $\mathtt{pred}$ (trivial for $\mathtt{mods}$).
        \begin{itemize}
                \item Semiring $(\{0,...,n\}, \min, +, n, 0)$
                \item Attribution function: 
                \begin{equation*}
                        \forall i \in [n], \quad \alpha(x_i)= \alpha(\overline{x_i}) = 1
                \end{equation*}
            \end{itemize}
        
        The associativity and commutativity of $\min$ and $+$ are trivial, as for the distributivity of $+$ over $\min$. It comes directly from the definition of the semiring that:
        $$ \alpha(\phi) = \min_{\mathbf{t} \in \mathtt{pred}(\phi)} ( \sum_{t \in \mathbf{t}} 1) = \min_{\mathbf{t} \in \mathtt{pred}(\phi)} |\mathbf{t}|$$
\end{itemize}

Instead of counting or optimizing on models, one may be interested in changing the base of the sum $\bigoplus$ in Equation \ref{eq:amc}. Let us assume a predicate $\mathtt{pred}$ that defines a subset of terms $\mathtt{pred}(\phi, q) \subset \mathtt{terms(\mathbf{X})}$ based on an input formula $\phi$ (and potentially additional inputs $q \in \mathcal{Q}$). We define the solution to the algebraic counting query $(\mathtt{pred}, \phi, q, \mathcal{SR}, \alpha)$ as:
\begin{equation} \label{eq:apc}
    \alpha(\mathtt{pred}, \phi, q) = \bigoplus_{\mathbf{t} \in \mathtt{pred}(\phi, q)} \bigotimes_{l \in \mathbf{t}} \alpha(l)
\end{equation}

Every XAI query described earlier can be modeled as an algebraic counting query by changing the base of a query in Table \ref{tab:queries} to a specific predicate from Table \ref{tab:predicates}. For any query $\mathtt{P}$ in Table \ref{tab:queries}, we note $\mathtt{P_{pred}[L]}$ the problem of computing $\alpha(\phi)$ based on predicate $\mathtt{pred}$ for any input formula $\phi \in \mathtt{L}$, attribution function $\alpha$ and additional input $q \in \mathcal{Q}$. Additionally, we note $\mathtt{Enum_{pred}[L]}$ the problem of enumerating the terms in $\mathtt{pred}(\phi, q)$ for any $\phi \in \mathtt{L}$ and $q \in \mathcal{Q}$.

\begin{table}[h!]
\centering
\begin{tabular}{l  l}
    \toprule
    \textbf{Predicate} & \textbf{Description}\\
    \midrule
    $\mathtt{mods}(\phi)$ & Models of $\phi$ \\
    $\mathtt{ips}(\phi)$ & Implicants of $\phi$ \\
    $\mathtt{pips}(\phi)$ & Prime implicants of $\phi$ \\
    $\mathtt{axp}(\phi, \mathbf{x})$ & Abductive explanations for a model $\mathbf{x}$ of $\phi$ \\
    $\mathtt{srs}(\phi, \mathbf{x})$ & Sufficient reasons for a model $\mathbf{x}$ of $\phi$ \\
    \bottomrule
\end{tabular}
\caption{Considered predicates and their descriptions.}
\label{tab:predicates}
\end{table}

\begin{example}
    The query $\mathtt{ST_{ips}[OBDD]}$ computes a shortest implicant of an $\mathtt{OBDD}$. The query $\mathtt{CT_axp[OBDD]}$ counts the number of abductive explanations of a given model $\mathbf{x}$ of an $\mathtt{OBDD}$. The query $\mathtt{Enum_{srs}[OBDD]}$ enumerates the sufficient reasons for any model $\mathbf{x}$ of an $\mathtt{OBDD}$.
\end{example}

\begin{figure}[h]
    \centering
    \begin{tikzpicture}[
        node distance=0.6cm and 0.5cm,
        box/.style={draw, rounded corners, minimum width=1.3cm, minimum height=0.7cm, font=\small},
        ->, >=Stealth, thick
    ]
    
    \node[box] (AMC) {$\mathtt{AMC}$};
    
    \node[box, below left=0.8cm and 0.5cm of AMC] (CTC) {$\mathtt{CTC}$};
    \node[box, below right=0.8cm and 0.5cm of AMC] (MWM) {$\mathtt{MWM}$};
    
    \node[box, below=0.8cm of CTC] (CT) {$\mathtt{CT}$};
    \node[box, below left=0.8cm and 0.0cm of MWM] (MPE) {$\mathtt{MPE}$};
    \node[box, below right=0.8cm and 0.0cm of MWM] (SSM) {$\mathtt{SSM}$};
    
    \node[box, below right=0.8cm and 0.0cm of MPE] (ST) {$\mathtt{ST}$};

    \draw (CTC) -- (AMC);
    \draw (MWM) -- (AMC);
    \draw (CT) -- (CTC);
    \draw (MPE) -- (MWM);
    \draw (SSM) -- (MWM);
    \draw (ST) -- (MPE);
    \draw (ST) -- (SSM);
    \end{tikzpicture}
    \caption{Reductions between algebraic counting queries: an arrow $\mathtt{P_1} \to \mathtt{P_2}$ indicates that $\mathtt{P_2}$ can be reduced to $\mathtt{P_1}$.}
    \label{fig:reductions}
\end{figure}

Moreover, these queries can be organized in a lattice of reductions displayed on Figure \ref{fig:reductions}.

\begin{claim*}
    All reductions displayed on Figure \ref{fig:reductions} are true for any predicate $\mathtt{pred}$ and representation language $\mathtt{L}$.
\end{claim*}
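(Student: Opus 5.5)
The plan is to prove each of the seven arrows of Figure~\ref{fig:reductions} separately, by the same pattern every time. Given an instance of the weaker query on $(\mathtt{pred},\phi,q)$, we keep $\mathtt{pred}$, $\phi$ and $q$ unchanged and change only the attribution function, and possibly post-process the returned value. Since $\mathtt{pred}(\phi,q)$ and $\phi$ never change, the argument does not depend on the predicate or on the language $\mathtt{L}$. In each case the key identity is a monotone or homomorphic map $h$ from the stronger semiring to the weaker one, applied to the term-wise product $\bigotimes_{l\in\mathbf{t}}\alpha(l)$, so that $h$ commutes with the outer $\bigoplus$.

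\textbf{The counting branch.} For $\mathtt{CT}\to\mathtt{CTC}$, we run $\mathtt{CTC}$ with the attribution of the $\mathtt{CTC}$ item and output $\sum_{k=0}^{n}\alpha(\mathtt{pred},\phi,q)_k$. The map $\mathbf{a}\mapsto\sum_k a_k$ sends $+^{n+1}$ to $+$ and $\bowtie^{n+1}$ to $\times$, as long as no cardinality exceeds $n$, which holds because terms have at most $n$ literals. It also sends each $\delta^i$ to $1$. For $\mathtt{CTC}\to\mathtt{AMC}$ nothing needs proving, since $\mathtt{CTC}$ is an instance of $\mathtt{AMC}$ with a fixed semiring. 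The same observation gives $\mathtt{MWM}\to\mathtt{AMC}$.

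\textbf{The optimization branch.} For $\mathtt{MPE}\to\mathtt{MWM}$, we take $-\log$ of the probabilities, with $-\log 0=\infty$. This maps $([0,1],\max,\times)$ isomorphically onto $(\mathbb{R}^+\cup\{\infty\},\min,+)$ because $-\log$ is strictly decreasing. We then exponentiate the $\mathtt{MWM}$ output. The side constraint $\alpha(x_i)=1-\alpha(\overline{x_i})$ in the $\mathtt{MWM}$ definition will not be preserved. I would treat $\mathtt{MWM}$ as allowing arbitrary real literal weights, which is the natural reading of the semiring. For $\mathtt{SSM}\to\mathtt{MWM}$, we encode the lexicographic vector $\mathbf{a}\in\times_k\{0,\dots,m_k\}$ as the real number $\sum_k a_k B^{p-k}$ with base $B=n+1$. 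Because every $a_k\le m_k\le n<B$, this is an order-embedding of $\leq_{\mathtt{lex}}$ into $\leq$ that turns $+^p$ into $+$. So we set $\alpha'(l)=B^{p-k}$ when $\mathtt{var}(l)\in\mathbf{P}_k$ and decode the base-$B$ digits of the minimum.

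\textbf{The shortest-term arrows.} For $\mathtt{ST}\to\mathtt{SSM}$, we use the trivial stratification $p=1$, $\mathbf{P}_1=\mathbf{X}$. For $\mathtt{ST}\to\mathtt{MPE}$, we set every literal's weight to a constant $c\in(0,1)$, say $1/2$, so that the value of a term is $c^{|\mathbf{t}|}$. The maximum is then $c^{\min|\mathbf{t}|}$, from which we read off the minimum size by a logarithm. Because the $\mathtt{MPE}$ definition forces $\alpha(x_i)=1-\alpha(\overline{x_i})$, choosing $c=1/2$ is convenient and satisfies this constraint.

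\textbf{Main obstacle.} I expect the hard part to be the bookkeeping about what ``reduction'' means. The optimization queries are meant to return a witness term rather than just a value. I would argue that each $h$ is order-preserving, or order-reversing for $-\log$, so an argmin for one attribution is an argmax or argmin for the other and witnesses transfer unchanged. I would also need to check that the numerical encodings, with base $B$ powers and logarithms, have size polynomial in $n$, and that the empty-set conventions match: $\mathbf{m}$, $\infty$, $0$ and $n$ as neutral elements when $\mathtt{pred}(\phi,q)=\emptyset$.
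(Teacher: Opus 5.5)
Your proposal is correct and matches the paper's proof arrow by arrow. In both, $\mathtt{pred}$ and $\phi$ stay fixed and only the attribution function is translated: sum the $\mathtt{CTC}$ vector, apply $-\log$ for $\mathtt{MPE}\to\mathtt{MWM}$, positionally encode the stratified size for $\mathtt{SSM}\to\mathtt{MWM}$ (your weights $(n+1)^{p-k}$ versus the paper's $(10n)^{-k}$), use the one-block stratification, and use constant weight $\tfrac{1}{2}$ for $\mathtt{ST}\to\mathtt{MPE}$.

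Your version is in fact slightly more careful than the paper's. The paper asserts $\alpha_1(\mathtt{pred},\phi)=\alpha_2(\mathtt{pred},\phi)$ where a decoding step (summation, $\exp$, $\log_2$) is actually needed. It also silently violates the $\alpha(x_i)=1-\alpha(\overline{x_i})$ side condition in $\mathtt{MPE}\to\mathtt{MWM}$. Finally, it ends the $\mathtt{ST}\to\mathtt{MPE}$ computation with $-\log(\min_{\mathbf{t}}|\mathbf{t}|)$, where the correct decoding is $\min_{\mathbf{t}}|\mathbf{t}| = -\log_2 \max_{\mathbf{t}} (\tfrac{1}{2})^{|\mathbf{t}|}$, as you wrote.
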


\begin{proof}
    We prove a reduction $\mathtt{P}_1 \to \mathtt{P}_2$ by showing that any attribution function $\alpha_1$ on the semiring of $\mathtt{P}_1$ can be translated in polynomial time into an attribution function $\alpha_2$ on the semiring of $\mathtt{P}_2$ such that
    $$\alpha_1(\mathtt{pred}, \phi) = \alpha_2(\mathtt{pred}, \phi)$$
    for any predicate $\mathtt{pred}$.
    
    Since this reduction does not depend on $\mathtt{L}$, it remains true for any choice of representation language used for $\phi$.
    
    \begin{itemize}
        \item The reducibility of queries $\mathtt{CTC}$ and $\mathtt{MWM}$ to $\mathtt{AMC}$ comes directly from the selection of a specific semiring and set of attribution functions.

        \item $\mathtt{CT} \to \mathtt{CTC}$: simply sum the counts over all cardinal to get the total count (\ie $|\mathtt{mods}(\phi)| = \sum_{0 \leq k \leq n} |\mathtt{mods}_k(\phi)|$).

        \item $\mathtt{MPE} \to \mathtt{MWM}$: given an attribution function $\alpha_{\mathtt{MPE}}:\mathtt{lits}(\mathbf{X}) \to [0,1]$ for $\mathtt{MPE}$, we note $p_i = \alpha_{\mathtt{MPE}}(x_i)=1-\alpha_{\mathtt{MPE}}(\overline{x_i})$ for $i \in [n]$. Then, set the attribution function $\alpha_{\mathtt{MWM}}:\mathtt{lits}(\mathbf{X}) \to \mathbb{R} \cup \{\infty\}$ to:
        \begin{equation}
            \begin{split}
                \forall i \in [n], \quad & \alpha_{\mathtt{MWM}}(x_i)= - \log(p_i) \\
                & \alpha_{\mathtt{MWM}}(\overline{x_i})= - \log(1-p_i)
            \end{split}
        \end{equation}
        with $\log(0)=-\infty$.

    We have:
    \begin{equation*}
        \begin{split}
            \alpha_{\mathtt{MWM}}(\phi) & = \min_{\mathbf{t} \in \mathtt{pred}(\phi)} ( \sum_{l \in \mathbf{x}} \alpha(l)) \\
            & = \min_{\mathbf{t} \in \mathtt{pred}(\phi)} - ( \sum_{X_i \in \mathbf{t}} \log(p_i) + \sum_{\overline{x_i} \in \mathbf{t}} \log(1-p_i) ) \\
            & = \max_{\mathbf{t} \in \mathtt{pred}(\phi)} \exp(\sum_{X_i \in \mathbf{t}} \log(p_i) + \sum_{\overline{x_i} \in \mathbf{t}} \log(1-p_i)) \\
            & = \max_{\mathbf{t} \in \mathtt{pred}(\phi)} ( \prod_{X_i \in \mathbf{t}} p_i \prod_{\overline{x_i} \in \mathbf{t}} (1-p_i) )  \\
            & = \alpha_{\mathtt{MPE}}(\phi)
        \end{split}
    \end{equation*}

        \item $\mathtt{SSM} \to \mathtt{MWM}$: given a partition $\mathbf{P} = (\mathbf{P}_1, ... \mathbf{P}_p)$, we fix the attribution function:
            \begin{equation}
                \forall i \in [n], \quad \alpha(x_i)= \alpha(\overline{x_i}) =(n \cdot 10)^{-k} \mbox{ if } X_i \in \mathbf{P}_k
            \end{equation}

            For any terms $\mathbf{t}_1, \mathbf{t}_2$, we have:
            \begin{equation}
                \mathbf{t}_1 \prec_{\mathbf{P}} \mathbf{t}_2 \iff \sum_{l \in \mathbf{t}_1} \alpha(l) \leq \sum_{l \in \mathbf{t}_2} \alpha(l)
            \end{equation}

    

        \item $\mathtt{ST} \to \mathtt{SSM}$: fix the partition $\mathbf{P} = \mathbf{X}$.

        \item $\mathtt{ST} \to \mathtt{MPE}$: fix the attribution function $\alpha_{\mathtt{MPE}}(X_i) = \alpha_{\mathtt{MPE}}(\overline{x_i}) = \frac{1}{2}$, we have:
    \begin{equation*}
        \begin{split}
            \alpha_{\mathtt{MPE}}(\phi) & = \max_{\mathbf{t} \in \mathtt{pred}(\phi)} \prod_{l \in \mathbf{t}} \alpha_{\mathtt{MPE}}(l)\\
            & = \max_{\mathbf{t} \in \mathtt{pred}(\phi)} (\frac{1}{2})^{|\mathbf{t}|} \\
            & = - \log(\min_{\mathbf{t} \in \mathtt{pred}(\phi)} |\mathbf{t}|) \\
        \end{split}
    \end{equation*}
    \end{itemize}
\end{proof}

\section{Dual-rail encoding} \label{sec:dual-rail}
In this section, we formally define the \textbf{dual-rail encoding} and detail its key properties. We assume that $f$ is a boolean function on boolean variables $\mathbf{X}=\{X_1,...,X_n\}$. We define the set of literal variables as $\mathbf{X}^{\pm}=\{X_1^+,...,X_n^+, X_1^-,...,X_n^-\}$. Terms and instances on $\mathbf{X}^{\pm}$ are noted $\tau$ and $\chi$ respectively. A term or an instance on $\mathbf{X}^{\pm}$ is said \textbf{pure} iff it satisfies $\varkappa = \bigwedge_{1 \leq i \leq n} \neg X_i^- \lor \neg X_i^+$. For any pure term $\tau$ on $\mathbf{X}^{\pm}$, we note $\tau^{\bullet}$ the term on $\mathbf{X}$ that contains the literal $x_i$ (resp. $\overline{x_i}$) iff $\tau$ contains the literal $x_i^+$ (resp. $x_i^-$). Reciprocally, for any term $\mathbf{t}$ on $\mathbf{X}$, we note $\mathbf{t}^{\pm}$ the pure term on $\mathbf{X}^{\pm}$ that contains the literal $x_i^+$ (resp. $x_i^-$) iff $\mathbf{t}$ contains the literal $x_i$ (resp. $\overline{x_i}$). Additionally, for any term $\tau$ on $\mathbf{X}^{\pm}$, we note $\tau^0$ the instance on $\mathbf{X}^{\pm}$ where every unassigned variable in $\tau$ has been assigned to $0$. We can easily see that the function mapping each pure instance $\chi$ on $\mathbf{X}^{\pm}$ to the term $\chi^{\bullet}$ on $\mathbf{X}$ is bijective. Its inverse maps every term $\mathbf{t}$ on $\mathbf{X}$ to the pure instance $(\mathbf{t}^{\pm})^0$ on $\mathbf{X}^{\pm}$.

\begin{example} \label{ex:chi}
    Consider the term $\mathbf{t} = \overline{x_2} x_3$ on $\mathbf{X}$. Transposing to the literal domain $\mathbf{X}^{\pm}$ yields $\mathbf{t}^{\pm}=x_2^- x_3^+$. Extending to form a pure instance yields $(\mathbf{t}^{\pm})^0=\overline{x_1^+} \overline{x_1^-} \overline{x_2^+} x_2^- x_3^+ \overline{x_3^-}$.
\end{example}

We can now define the dual-rail encoding of a boolean function.
\begin{definition}
    The \textbf{dual-rail encoding} of $f$, noted $f^{\pm}$, is the boolean function on $\mathbf{X}^{\pm}$ such that:
    \begin{gather}
        f^{\pm} \models \varkappa \label{eq:purity} \\
        \chi \models f^{\pm} \iff \chi^{\bullet} \models f \label{eq:correspondance}
    \end{gather}
\end{definition}

It is well and uniquely defined since Equation \ref{eq:purity} constrains models of $f^{\pm}$ to be pure and Equation \ref{eq:correspondance} imposes a correspondence between the models of $f^{\pm}$ and the implicants of $f$.

\begin{example}[Example 1 cont'ed]
    Let $\phi^{\pm}$ be the dual-rail encoding of $\phi$. The model of $\phi^{\pm}$ corresponding to the implicant $\mathbf{t} = \overline{x_2} x_3$ is $(\mathbf{t}^{\pm})^0$ as defined in Example \ref{ex:chi}.
\end{example}





Given a model $\mathbf{x}$ of $f$, we would like to filter implicants of $f$ to only select those that are consistent with $\mathbf{x}$, \ie abductive explanations for $(\mathbf{x}, f)$.

\begin{definition}
    Given a model $\mathbf{x}$ of $f$, we note $f^{\mathbf{x}}$ the boolean function based on selector variables $\mathbf{S}=\{S_1,...,S_n\}$ such that:
    \begin{equation}
        \forall \mathbf{s} \in \{0,1\}^n, \mathbf{s} \models f^{\mathbf{x}}  \iff \mathbf{x_s} \models f
    \end{equation}
    where $\mathbf{x_s}$ is the term composed of literals $l$ such that $\mathtt{var}(l) \in \{X_i \in \mathbf{X} | s_i=1\}$.
\end{definition}

\begin{lemma}
    The function $f^{\mathbf{x}}$ is monotone.
\end{lemma}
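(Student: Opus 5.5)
To show that $f^{\mathbf{x}}$ is monotone, I will fix two selector vectors $\mathbf{s}, \mathbf{s}' \in \{0,1\}^n$ with $\mathbf{s} \leq \mathbf{s}'$ componentwise, i.e.\ $s_i = 1 \Rightarrow s'_i = 1$. It then suffices to prove that $\mathbf{s} \models f^{\mathbf{x}}$ implies $\mathbf{s}' \models f^{\mathbf{x}}$. By the definition of $f^{\mathbf{x}}$, this amounts to showing that if $\mathbf{x_s}$ is an implicant of $f$, then so is $\mathbf{x_{s'}}$.

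The first step is to observe that $\mathbf{x_s} \subset \mathbf{x_{s'}}$. Both terms are built by selecting literals of the single instance $\mathbf{x}$, indexed respectively by $\{i \mid s_i=1\}$ and $\{i \mid s'_i=1\}$, and the first index set is contained in the second. Hence $\mathbf{x_{s'}}$ is an extension of $\mathbf{x_s}$. Both are consistent terms, being subsets of the instance $\mathbf{x}$.

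The second step uses the definition of implicant. A term $\mathbf{t}$ is an implicant of $f$ iff every instance extending $\mathbf{t}$ is a model of $f$. Any instance extending $\mathbf{x_{s'}}$ also extends $\mathbf{x_s}$, by transitivity of inclusion. Therefore, if every extension of $\mathbf{x_s}$ is a model, then so is every extension of $\mathbf{x_{s'}}$, which gives $\mathbf{x_{s'}} \models f$ and hence $\mathbf{s}' \models f^{\mathbf{x}}$. This is exactly monotonicity.

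The only point needing care is reading $\mathbf{x_s}$ correctly: it consists of the literals \emph{of $\mathbf{x}$} on the selected variables. Otherwise the inclusion $\mathbf{x_s}\subset\mathbf{x_{s'}}$ and its consistency would not be automatic. I expect no genuine obstacle beyond this, since the argument is a direct unfolding of the definitions.
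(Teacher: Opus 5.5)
Your proposal is correct and follows the paper's proof essentially verbatim: take $\mathbf{s} \leq \mathbf{s}'$, observe $\mathbf{x_s} \subset \mathbf{x_{s'}}$, and conclude that an extension of an implicant is again an implicant. You also spell out two points the paper leaves implicit: why that last step holds (every instance extending $\mathbf{x_{s'}}$ also extends $\mathbf{x_s}$), and that $\mathbf{x_s}$ must be read as consisting of literals of $\mathbf{x}$.
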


\begin{proof}
    Let us assume two instances $\mathbf{s}, \mathbf{v}$ on $\mathbf{S}$ such that $\forall i, s_i \leq v_i$, we have:
    \begin{equation*}
        \begin{split}
            \mathbf{s} \models f^{\mathbf{x}} & \implies \mathbf{x_s} \models f \\
                                            & \implies \mathbf{x_v} \models f \mbox{ since } \mathbf{x_s} \subset \mathbf{x_v} \\\
                                            & \implies \mathbf{v} \models f^{\mathbf{x}}
        \end{split}
    \end{equation*}
    Therefore, $f^{\mathbf{x}}$ is monotone.
\end{proof}

\begin{theorem} \label{th:pips2sr}
    Prime implicants of $f^{\mathbf{x}}$ correspond to sufficient reasons for the decision $(\mathbf{x}, f)$.
\end{theorem}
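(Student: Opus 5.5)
We plan to make the correspondence explicit through the bijection between subsets of $[n]$ and sub-terms of $\mathbf{x}$. For $\mathbf{s} \in \{0,1\}^n$ let $I_{\mathbf{s}} = \{i \mid s_i = 1\}$; then $\mathbf{x_s}$ is the sub-term of $\mathbf{x}$ on the variables $\mathbf{X}_{I_{\mathbf{s}}}$. Conversely, every term consistent with $\mathbf{x}$ is a sub-term of $\mathbf{x}$, because $\mathbf{x}$ is an instance. Hence $\mathbf{s} \mapsto \mathbf{x_s}$ is a bijection from $\{0,1\}^n$ onto the terms consistent with $\mathbf{x}$, and it is an order isomorphism: $\mathbf{s} \leq \mathbf{v}$ componentwise iff $\mathbf{x_s} \subset \mathbf{x_v}$. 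By the defining equivalence of $f^{\mathbf{x}}$, the bijection sends $\mathtt{mods}(f^{\mathbf{x}})$ exactly onto $\mathtt{axp}(f,\mathbf{x})$. It follows that minimal models of $f^{\mathbf{x}}$ for $\leq$ correspond to subset-minimal abductive explanations, that is, to sufficient reasons.

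The remaining step, and the only real obstacle, is to identify the prime implicants of the monotone function $f^{\mathbf{x}}$ with its minimal models. Here we will use the lemma that $f^{\mathbf{x}}$ is monotone, together with the standard fact that a prime implicant of a monotone function contains only positive literals. We argue this directly.

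First, let $\mathbf{t}$ be an implicant of $f^{\mathbf{x}}$ that contains a negative literal $\overline{s_i}$. Removing $\overline{s_i}$ still gives an implicant. Indeed, any extension of the shortened term either sets $s_i=0$, in which case it extends $\mathbf{t}$, or sets $s_i=1$, in which case it dominates an extension of $\mathbf{t}$ and is a model by monotonicity. So a prime implicant is a positive term $\bigwedge_{i\in I} s_i$.

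Second, a positive term $\bigwedge_{i\in I} s_i$ is an implicant iff its smallest extension $\mathbf{s}$ with $I_{\mathbf{s}}=I$ is a model, again by monotonicity. A positive term is prime iff no proper subset $J \subsetneq I$ yields an implicant; it suffices to check this for positive sub-terms, since sub-terms of a positive term are positive. By the same equivalence, this holds iff $\mathbf{s}$ is a minimal model.

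Combining the two steps, the prime implicants $\bigwedge_{i\in I} s_i$ of $f^{\mathbf{x}}$ correspond exactly to the sufficient reasons $\mathbf{x_s}$ with $I_{\mathbf{s}}=I$, that is, $\{l \in \mathbf{x} \mid \mathtt{var}(l) \in \mathbf{X}_I\}$. This is the correspondence claimed in Theorem \ref{th:pips2sr}.
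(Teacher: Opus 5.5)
Your proof is correct and follows essentially the same route as the paper. Monotonicity of $f^{\mathbf{x}}$ forces prime implicants to be positive terms, positive terms over $\mathbf{S}$ correspond to sub-terms of $\mathbf{x}$, and primality matches subset-minimality. Your version is more complete than the paper's. You prove that prime implicants are positive, which the paper only asserts. By factoring through the minimal models of $f^{\mathbf{x}}$, you also establish both directions of the correspondence, whereas the paper only shows that a prime implicant yields a sufficient reason before declaring a bijection.
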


\begin{proof}
    By monotonicity, a prime implicant $\mathbf{r}$ of $f^{\mathbf{x}}$ is a set of positive literals. We note $\mathbf{x_r}$ the term on $\mathbf{X}$ composed of the literals from $\mathbf{x}$ selected from $\mathbf{r}$. Then, $\mathbf{x_r}$ is an abductive explanation for $(\mathbf{x}, f)$ by definition of $f^{\mathbf{x}}$ and is subset minimal because $\mathbf{r}$ is: if a proper sub-term of $\mathbf{x_r}$ is still an abductive explanation for $(\mathbf{x}, f)$, then the corresponding proper sub-term of $\mathbf{r}$ is an implicant of $f^{\mathbf{x}}$, which is a contradiction. Therefore, prime implicants of $f^{\mathbf{x}}$ are in bijection with sufficient reasons for the decision $(\mathbf{x}, f)$.
\end{proof}

\begin{example}[Example 1 cont'ed]
    Let $\mathbf{x}=x_1 \overline{x_2} x_3$ be the same model as in Example \ref{ex:1}. Then, $\phi^{\mathbf{x}}$ has $3$ models (resp. $2$ prime implicants) corresponding to the $3$ abductive explanations (resp. $2$ sufficient reasons) for the decision $(\mathbf{x}, \phi)$. For example, $s_1s_2s_3 \in \mathtt{mods}(\phi^{\mathbf{x}})$ corresponds to $x_1 \overline{x_2} x_3 \in \mathtt{axp}(\phi, \mathbf{x})$ and $s_1 \in \mathtt{pips}(\phi^{\mathbf{x}})$ corresponds to $x_1 \in \mathtt{srs}(\phi, \mathbf{x})$.
\end{example}

\begin{theorem} \label{th:pm2x}
    Given a representation of $f^{\pm}$ in a language $\mathtt{L}$ that offers polynomial support for conditioning and a model $\mathbf{x} \in \mathtt{mods}(f)$, a representation of $f^{\mathbf{x}}$ in $\mathtt{L}$ can be obtained in polynomial time in the size of $f^{\pm}$.
\end{theorem}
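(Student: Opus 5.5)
The plan is to obtain $f^{\mathbf{x}}$ from $f^{\pm}$ by a single conditioning on a term over $\mathbf{X}^{\pm}$ that is built from $\mathbf{x}$, followed by renaming the surviving variables into the selector variables $\mathbf{S}$. For each $i \in [n]$, let $l_i$ be the literal of $\mathbf{x}$ on $X_i$. Let $x_i^{\mathbf{x}}$ denote the literal variable in $\mathbf{X}^{\pm}$ that agrees with $\mathbf{x}$: it is $X_i^+$ if $x_i = 1$ and $X_i^-$ otherwise. Let $\overline{x_i}^{\mathbf{x}}$ denote the opposite literal variable. Define the term
$$\tau_{\mathbf{x}} = \bigwedge_{i \in [n]} \neg\, \overline{x_i}^{\mathbf{x}},$$
which sets to $0$ every literal variable that disagrees with $\mathbf{x}$. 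By the assumption on $\mathtt{L}$, we compute $f^{\pm}|\tau_{\mathbf{x}}$ in time polynomial in $|f^{\pm}|$. This function depends only on the $n$ variables $x_i^{\mathbf{x}}$. We then rename each $x_i^{\mathbf{x}}$ into $S_i$, which is a linear-time relabelling of leaves or decision nodes and preserves membership in any of the languages considered.

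Correctness reduces to checking, for every $\mathbf{s} \in \{0,1\}^n$, that $\mathbf{s} \models f^{\pm}|\tau_{\mathbf{x}}$ (after renaming) iff $\mathbf{x_s} \models f$. To do this, associate with $\mathbf{s}$ the instance $\chi_{\mathbf{s}}$ on $\mathbf{X}^{\pm}$ that extends $\tau_{\mathbf{x}}$ by setting $x_i^{\mathbf{x}} = s_i$.
\begin{enumerate}
\item $\chi_{\mathbf{s}}$ is pure, since at most one of $X_i^+, X_i^-$ is true for each $i$.
\item By construction $\chi_{\mathbf{s}}^{\bullet}$ contains exactly the literals $l_i$ with $s_i = 1$, so $\chi_{\mathbf{s}}^{\bullet} = \mathbf{x_s}$.
\item By the definition of conditioning, $\mathbf{s} \models f^{\pm}|\tau_{\mathbf{x}}$ iff $\chi_{\mathbf{s}} \models f^{\pm}$.
\item By Equation \ref{eq:correspondance}, this holds iff $\chi_{\mathbf{s}}^{\bullet} = \mathbf{x_s} \models f$, which is exactly the defining condition of $f^{\mathbf{x}}$.
\end{enumerate}
Purity (Equation \ref{eq:purity}) is not an obstacle here, because every instance extending $\tau_{\mathbf{x}}$ is automatically pure.

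I expect the main subtlety to lie in the semantics of conditioning and renaming rather than in the computation itself. The conditioned formula is formally a function over $\mathbf{X}^{\pm}$ that does not depend on the conditioned variables. We must argue that renaming $x_i^{\mathbf{x}}$ to $S_i$ stays within $\mathtt{L}$ with no size blow-up. For $\mathtt{OBDD}$, the order restricted to the $x_i^{\mathbf{x}}$ transfers directly to $\mathbf{S}$. For structured classes, the vtree is projected, and a projected vtree may need non-full nodes contracted, which is routine. Since $|\tau_{\mathbf{x}}| = n \le |f^{\pm}|$ (assuming $f^{\pm}$ mentions its variables), the overall time is polynomial in the size of $f^{\pm}$.
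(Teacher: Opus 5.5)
Your proposal is correct and follows essentially the same approach as the paper. Your $\tau_{\mathbf{x}}$ is exactly the paper's $\delta_{\mathbf{x}}$: it zeroes, for each $i$, the literal variable that disagrees with $\mathbf{x}$. The paper then conditions on it and renames the single surviving variable per index to $S_i$. Your explicit instance $\chi_{\mathbf{s}}$ simply spells out the correctness check (purity, $\chi_{\mathbf{s}}^{\bullet}=\mathbf{x_s}$, and Equation \ref{eq:correspondance}) that the paper states only informally.
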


\begin{proof}
    First, we note $\delta_{\mathbf{x}}$ the pure term on $\mathbf{X}^{\pm}$ that contains the literal $\overline{x_i^+}$ (resp. $\overline{x_i^-}$) iff $x_i=0$  (resp. $x_i=1$). For any pure instance $\chi$ on $\mathbf{X}^{\pm}$, $\chi$ is a model of $\delta_{\mathbf{x}}$ iff $\chi^{\bullet}$ is consistent with $\mathbf{x}$. Therefore, conditioning $f^{\pm}$ on $\delta_{\mathbf{x}}$ effectively restricts the set of considered implicants to the abductive explanations for $(\mathbf{x}, f)$. Finally, remaining variables are renamed to yield $f^{\mathbf{x}}$: any occurrence of a variable $X_i^+$ or $X_i^-$ in $\mathtt{var}(f^{\pm} |\delta_{\mathbf{x}}) = \mathbf{X}^{\pm} \setminus \mathtt{var}(\delta_{\mathbf{x}})$ is replaced with the variable $S_i$. Notice that for any $i \in [n]$ both $X_i^+$ and $X_i^-$ cannot belong together in $\mathtt{var}(f^{\pm} |\delta_{\mathbf{x}})$, therefore no two distinct variables are rename with the same variable in $\mathbf{S}$.
\end{proof}

\section{Solving hard queries using $f^{\pm}$} \label{sec:hardxai}

\paragraph{Reasoning on implicants.}
Reasoning on the implicants of a boolean function is a hard computational task in general. Deciding if a term is an implicant of a propositional formula or a boolean circuit is \textbf{coNP}-complete. Furthermore, knowing if a $\mathtt{3-DNF}$ has an implicant included in a given subset of its variables, a problem usually referred as $\mathtt{QSAT}_2$, is $\mathbf{\Sigma_2^P}$-complete and a canonical problem of the second level of the polynomial hierarchy \citep{umans_minimum_2001}. Things become much simpler for $\mathtt{d\text{-}DNNF}$ and its subsets, where both finding a model and deciding if a term is an implicant can be done in polynomial time. Since a solution can be checked in polynomial time, many decision and optimization problems on implicants fall from $\mathbf{\Sigma_2^P}$ to \textbf{NP}.

Amongst the XAI queries one might be interested to perform on the implicants of a formula, finding a shorter or optimal implicant based on explainee's preferences can be very useful. First, it provides a lower bound on the size of explanations for any model of the boolean function, which can be used as a measure of its explainability under the explainee's preferences. Secondly, optimal implicants under additive dis-utility functions can be used to emulate most probable abductive explanations of a decision taken by marginal inference on probabilistic inputs, for instance in neurosymbolic systems \citep{Manhaeve2021}.

We show in this section that most optimization queries on implicants remain \textbf{NP}-hard even when the input formula is represented as an $\mathtt{OBDD}$, one of the most tractable subset of $\mathtt{d\text{-}DNNF}$. This is for instance the case of the variant of $\mathtt{QSAT}_2$ based on $\mathtt{OBDD}$, which we call $\mathtt{QSAT_2[OBDD]}$, as shown in \citep{colnet_complexity_2022}.


\begin{lemma}[\cite{colnet_complexity_2022}] \label{lem:qsat2obdd}
    Deciding if an $\mathtt{OBDD}$ has an implicant included in a subset $\mathbf{Y} \subset \mathbf{X}$ of its variables is \textbf{NP}-hard.
\end{lemma}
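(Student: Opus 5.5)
We reduce from an \textbf{NP}-complete problem whose instances are easy to encode as an $\mathtt{OBDD}$. The natural choice is \textsc{Vertex Cover} (equivalently, a monotone 2-CNF minimal-support question). The key observation is about what it means for an implicant to live inside $\mathbf{Y}$. A term $\mathbf{t}$ with $\mathtt{var}(\mathbf{t}) \subset \mathbf{Y}$ is an implicant of $\phi$ iff the partial assignment $\mathbf{t}$ makes $\phi$ true for \emph{every} assignment of $\mathbf{X} \setminus \mathbf{Y}$. Therefore $\phi$ has an implicant included in $\mathbf{Y}$ iff
\[
\exists \mathbf{y} \in \{0,1\}^{\mathbf{Y}},\ \forall \mathbf{z} \in \{0,1\}^{\mathbf{X}\setminus\mathbf{Y}},\ \phi(\mathbf{y},\mathbf{z})=1 .
\]
One direction takes the implicant and extends it arbitrarily on $\mathbf{Y}$. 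The other takes the full assignment $\mathbf{y}$ itself as the implicant. So the task is to build, in polynomial time, an $\mathtt{OBDD}$ whose $\exists\forall$ question encodes an NP-hard problem.

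For the construction, take a graph $G=(V,E)$ with $V=\{v_1,\dots,v_m\}$ and an integer $k$. Let $\mathbf{Y}=\{Y_1,\dots,Y_m\}$ indicate chosen vertices. Let $\mathbf{Z}$ consist of $\lceil \log_2 |E| \rceil$ bits naming an edge, together with a few extra bits used for the cardinality part. The plan is an $\mathtt{OBDD}$ with order $\mathbf{Z}$ first, then $\mathbf{Y}$, computing
\[
\phi(\mathbf{y},\mathbf{z}) \;=\; \big[\mathbf{z} \text{ does not encode an edge}\big] \;\lor\; \big(Y_a \lor Y_b \text{ for the edge } \{v_a,v_b\} \text{ encoded by } \mathbf{z}\big).
\]
This $\mathtt{OBDD}$ has polynomial size: it is a complete decision tree over the $O(\log|E|)$ edge-index bits, which has $O(|E|)$ leaves, and each leaf hangs a small sub-$\mathtt{OBDD}$ on $Y_a,Y_b$ in the global order.

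The existence of $\mathbf{y}$ with $\forall \mathbf{z}\, \phi$ is then exactly the existence of a vertex cover, which is trivial without the bound $k$. To add the bound, I would make the order $\mathbf{Y}$ first, then $\mathbf{Z}$. In this order, for each value of the edge index, the diagram must remember which $Y$'s were set, and that threatens blow-up.

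The main obstacle is reconciling the cardinality constraint with the universal quantifier while keeping the variable order fixed. My first attempt is to make $\mathbf{Z}$ play the role of a universally chosen "challenge" that either checks an edge or checks a counting condition. The diagram is a disjunction over challenge types selected by a top $\mathbf{Z}$ bit. Branch 0 is the edge check above. Branch 1 is a threshold test $\sum_i Y_i \le k$, which is a polynomial-size $\mathtt{OBDD}$ on $\mathbf{Y}$ using counter nodes. Both branches read $\mathbf{Z}$ first and then $\mathbf{Y}$ in the fixed order $Y_1<\dots<Y_m$, so the combined diagram is an $\mathtt{OBDD}$ of size $O(|E|+mk)$. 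Then $\forall \mathbf{z}\,\phi(\mathbf{y},\mathbf{z})$ holds iff $\mathbf{y}$ is a vertex cover of size at most $k$, which completes the reduction from \textsc{Vertex Cover}. I would finally check that the "invalid edge code" leaves are routed to $\top$ so that they impose no constraint.
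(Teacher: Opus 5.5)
Your reduction is correct. The paper gives no proof of this lemma and simply imports it from \cite{colnet_complexity_2022}, so your argument is a self-contained substitute rather than a reconstruction of something in the text.

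\textbf{Why it works.} You first reformulate: $\phi$ has an implicant over $\mathbf{Y}$ iff $\exists \mathbf{y}\,\forall \mathbf{z}\,\phi(\mathbf{y},\mathbf{z})=1$. The universally quantified block $\mathbf{Z}$ has only logarithmically many bits and is tested \emph{first}. Hence $\forall \mathbf{z}\,\phi$ collapses to a polynomial conjunction of constraints on $\mathbf{Y}$. Meanwhile the $\mathtt{OBDD}$ itself is a decision tree on $\mathbf{Z}$ whose leaves carry separate small sub-diagrams on $\mathbf{Y}$, all in the order $Y_1<\dots<Y_m$, so no product blow-up occurs. With the edge branch (invalid codes sent to $\top$) and the $O(mk)$ threshold branch, $\forall \mathbf{z}\,\phi(\mathbf{y},\mathbf{z})$ holds exactly when $\mathbf{y}$ is a vertex cover of size at most $k$. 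This gives a polynomial reduction from \emph{Vertex Cover}.

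\textbf{Tidying.} The intermediate paragraph where you switch to the order $\mathbf{Y}$ before $\mathbf{Z}$ is a dead end you later abandon, and it should be removed.

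\textbf{A simpler variant.} The cardinality branch, though correct, is only needed because you chose \emph{Vertex Cover}, whose unbounded version is trivial, as you note. The same challenge gadget reduces directly from $\mathtt{CNF}$ satisfiability. Let $\mathbf{Z}$ encode the index of a clause $C_j$ over $\mathbf{Y}$, hang an $O(|C_j|)$-size $\mathtt{OBDD}$ for $C_j$ under code $j$, and send invalid codes to $\top$. Then $\forall \mathbf{z}\,\phi(\mathbf{y},\mathbf{z})$ holds iff $\mathbf{y}$ satisfies the formula. This version avoids the threshold diagram entirely. Your vertex-cover version has the mild advantage of matching the vertex-cover gadget the paper uses later in Lemma \ref{lemma:vc}.
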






We prove the hardness of finding a minimum size implicant of an $\mathtt{OBDD}$ by a reduction from $\mathtt{QSAT}_2[\mathtt{OBDD}]$ to $\mathtt{ST_{ips}[OBDD]}$, adapted from the reduction of $\mathtt{QSAT}_2$ to $\mathtt{ST_{ips}[BC]}$ found in \cite{umans_minimum_2001}.


\begin{theorem} \label{th:sipobdd}
    Computing a shortest implicant of an $\mathtt{OBDD}$ ($\mathtt{ST_{ips}[OBDD]}$) is \textbf{NP}-hard.
\end{theorem}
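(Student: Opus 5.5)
We reduce $\mathtt{QSAT_2[OBDD]}$, which is \textbf{NP}-hard by Lemma \ref{lem:qsat2obdd}, to $\mathtt{ST_{ips}[OBDD]}$. The idea, adapted from \cite{umans_minimum_2001}, is to make every variable outside $\mathbf{Y}$ so expensive to mention that a short implicant can only use variables of $\mathbf{Y}$. We do this by replacing each such variable with the parity of many fresh copies.

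\textbf{Construction.} Let $\phi$ be an $\mathtt{OBDD}$ on $\mathbf{X}$ with order $\sigma$, and let $\mathbf{Y}\subset\mathbf{X}$. Put $\mathbf{Z}=\mathbf{X}\setminus\mathbf{Y}$ and $m=|\mathbf{Y}|+1$. Replace each $Z\in\mathbf{Z}$ by fresh variables $Z^1,\dots,Z^m$, and let
\[
\phi'=\phi[Z\mapsto Z^1\oplus\cdots\oplus Z^m]_{Z\in\mathbf{Z}}.
\]

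\textbf{$\phi'$ is a polynomial-size $\mathtt{OBDD}$.} Take the order obtained from $\sigma$ by putting the block $Z^1,\dots,Z^m$ at the position of $Z$. Replace each node labeled $Z$, with $0$-child $n_0$ and $1$-child $n_1$, by a parity gadget. The gadget has two chains of $m$ nodes that track the parity of the copies read so far. After $Z^m$, it exits to $n_1$ on odd parity and to $n_0$ on even parity. Every path still reads variables in the new order, so $\phi'$ is an $\mathtt{OBDD}$ of size $O(m|\phi|)$.

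\textbf{Claim.} $\phi$ has an implicant with variables in $\mathbf{Y}$ iff $\phi'$ has an implicant of size at most $|\mathbf{Y}|$. Since a shortest implicant yields this threshold test, this proves the theorem.

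\emph{($\Rightarrow$)} Let $\mathbf{t}$ be an implicant of $\phi$ with $\mathtt{var}(\mathbf{t})\subset\mathbf{Y}$. Then $\mathbf{t}$ mentions no $Z$-copy and $\phi'$ agrees with $\phi$ on $\mathbf{Y}$, so $\mathbf{t}$ is also an implicant of $\phi'$. Its size is at most $|\mathbf{Y}|$.

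\emph{($\Leftarrow$)} Let $\tau$ be an implicant of $\phi'$ with $|\tau|\le|\mathbf{Y}|<m$. For each $Z\in\mathbf{Z}$, $\tau$ mentions at most $|\tau|<m$ copies of $Z$, so some copy $Z^j$ is unassigned. Let $\tau_{\mathbf{Y}}$ be the restriction of $\tau$ to $\mathbf{Y}$; we show it is an implicant of $\phi$. Take any instance $\mathbf{x}$ of $\mathbf{X}$ extending $\tau_{\mathbf{Y}}$, and build an instance of the new variables as follows:
\begin{itemize}
\item copy $\mathbf{x}$ on $\mathbf{Y}$;
\item for each $Z$, keep $\tau$'s values on the copies it assigns, and set the remaining copies so that their parity equals $x_Z$, using the free copy $Z^j$.
\end{itemize}
This instance extends $\tau$, so it is a model of $\phi'$. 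Its parities reproduce $\mathbf{x}$, so $\phi(\mathbf{x})=1$.

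\textbf{Main obstacle.} The delicate step is the ($\Leftarrow$) direction: one must make sure that partially fixing a block of copies gives no leverage. The threshold $m>|\mathbf{Y}|$ handles this, because it guarantees a free copy in every block. The other point to check is that the parity gadget respects a single global order. It does, because each block of copies is contiguous and lies exactly where $Z$ stood in $\sigma$.
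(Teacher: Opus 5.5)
Your proof is correct, but it takes a different route from the paper. The paper leaves the variables outside $\mathbf{Y}$ untouched. Instead, it adds one fresh copy $Z_i$ for each $X_i \in \mathbf{Y}$ and sets $\varphi = \phi \land \bigwedge_i (X_i \iff Z_i)$. A term implies $X_i \iff Z_i$ only if it fixes both $X_i$ and $Z_i$, so every implicant of $\varphi$ mentions all $2|\mathbf{Y}|$ variables of $\mathbf{Y}\cup\mathbf{Z}$. Hence a shortest implicant has size exactly $2|\mathbf{Y}|$ iff some implicant mentions nothing else, i.e.\ iff $\phi$ has an implicant inside $\mathbf{Y}$ (padded to a full assignment of $\mathbf{Y}$). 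In short, the paper makes the allowed variables mandatory, whereas you make the forbidden ones unaffordable through parity blocks of $|\mathbf{Y}|+1$ copies.

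Each approach has an advantage. The paper's gadget is lighter: only $|\mathbf{Y}|$ fresh variables and one conjunction with a linear-size $\mathtt{OBDD}$, under an order placing $Z_i$ right after $X_i$. However, its unproved claim that $\varphi$ is an $\mathtt{OBDD}$ computable in polynomial time relies on the product (apply) construction for a common order; naively inserting a $Z_i$ test after each $X_i$ node would miss paths that skip $X_i$. Your construction uses $O(|\mathbf{Y}|\cdot|\mathbf{X}\setminus\mathbf{Y}|)$ fresh variables, but it is a purely local node substitution that is sound even on paths skipping $Z$. It is also value-preserving: every implicant $\tau$ of $\phi'$ with $|\tau|\le|\mathbf{Y}|$ shrinks to the implicant $\tau_{\mathbf{Y}}$. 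So whenever the shortest implicant of $\phi'$ has size at most $|\mathbf{Y}|$, that size equals the size of a shortest implicant of $\phi$ within $\mathbf{Y}$. The paper's padding to full $\mathbf{Y}$-assignments discards this information.
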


\begin{proof}
    Given a subset of variables $\mathbf{Y} \subset \mathbf{X}$, we assume w.l.o.g. that $\mathbf{Y}=\{X_1,...,X_m\}$ with $m \leq n$. We create additional variables $\mathbf{Z}=\{Z_1,...,Z_m\}$ and define the formula:
    \begin{equation}
        \varphi = \phi \land (\bigwedge_{i \in [m]} X_i \iff Z_i)
    \end{equation}

    If $\phi$ is provided as an $\mathtt{OBDD}$, then an $\mathtt{OBDD}$ representation of $\varphi$ can be computed in polynomial time.

    On the one hand, notice that an implicant of $\varphi$ must include all variables in $\mathbf{Y} \cup \mathbf{Z}$ to ensure that each constraint $X_i \iff Z_i$ for $i \in [m]$ is satisfied. Besides, an implicant of $\phi$ can be extracted by deleting the literals in $\mathbf{Z}$. On the other hand, an implicant on $\phi$ can be completed to form an implicant of $\varphi$. Therefore, $\phi$ has an implicant included in $\mathbf{Y}$ iff $\varphi$ has an implicant of size at most $2m$, which can be decided by a single $\mathtt{ST_{ips}[OBDD]}$ query.

    Therefore, $\mathtt{QSAT}_2[\mathtt{OBDD}]$ can be reduced in polynomial time to $\mathtt{ST_{ips}[OBDD]}$. Since $\mathtt{QSAT}_2[\mathtt{OBDD}]$ is \textbf{NP}-hard, this also proves that $\mathtt{ST_{ips}[OBDD]}$ is \textbf{NP}-hard.
\end{proof}

The combination of Theorem \ref{th:sipobdd} with the reductions displayed on Figure \ref{fig:reductions} shows that any optimization query found in Table \ref{tab:queries} made on implicants is \textbf{NP}-hard when $f$ is given as an $\mathtt{OBDD}$.

Contrastingly, assuming a representation of $f^{\pm}$ that offers polynomial support for algebraic model counting, the correspondence between the models of $f^{\pm}$ and the implicants of $f$ allows us to answer any algebraic counting query on implicants.


\begin{theorem} \label{th:aipc}
    Answering any algebraic counting query on the implicants of a boolean function $f$ given a $\mathtt{d\text{-}DNNF}$ representation $\varphi$ of $f^{\pm}$ can be done in polynomial time in $|\varphi|$.
\end{theorem}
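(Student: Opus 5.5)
The plan is to reduce the algebraic counting query on implicants of $f$ to a standard algebraic model counting query on $\varphi$, and then to invoke the known tractability of $\mathtt{AMC}$ on $\mathtt{d\text{-}DNNF}$ \citep{kimmig_algebraic_2017}.

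The reduction rests on the bijection established in Section \ref{sec:dual-rail}. By Equations \ref{eq:purity} and \ref{eq:correspondance}, the map $\chi \mapsto \chi^{\bullet}$ is a bijection from $\mathtt{mods}(f^{\pm})$ onto $\mathtt{ips}(f)$. Its inverse is $\mathbf{t} \mapsto (\mathbf{t}^{\pm})^0$.

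Given a semiring $\mathcal{SR}=(\mathcal{A}, \oplus, \otimes, e^{\oplus}, e^{\otimes})$ and an attribution function $\alpha$ on $\mathtt{lits}(\mathbf{X})$, I would define an attribution $\alpha^{\pm}$ on $\mathtt{lits}(\mathbf{X}^{\pm})$ as follows:
\begin{itemize}
\item $\alpha^{\pm}(x_i^+)=\alpha(x_i)$ and $\alpha^{\pm}(x_i^-)=\alpha(\overline{x_i})$;
\item $\alpha^{\pm}(\overline{x_i^+})=\alpha^{\pm}(\overline{x_i^-})=e^{\otimes}$.
\end{itemize}
For any model $\chi$ of $f^{\pm}$, the negative literals then contribute the neutral element. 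Hence
\[
\bigotimes_{l\in\chi}\alpha^{\pm}(l)=\bigotimes_{l\in\chi^{\bullet}}\alpha(l),
\]
using commutativity and associativity of $\otimes$. Reindexing the $\oplus$-sum through the bijection then gives $\alpha^{\pm}(\varphi)=\alpha(\mathtt{ips}, f)$. Building $\alpha^{\pm}$ takes time linear in $n$.

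The remaining step is computing $\alpha^{\pm}(\varphi)$ in polynomial time. The standard approach is a bottom-up evaluation of $\varphi$. Literal leaves get $\alpha^{\pm}(l)$, $\top$ gets $e^{\otimes}$, $\bot$ gets $e^{\oplus}$, $\land$-nodes get $\otimes$, and $\lor$-nodes get $\oplus$.

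Correctness of this evaluation needs three ingredients:
\begin{itemize}
\item determinism, so that $\oplus$ counts no model twice;
\item decomposability, so that $\otimes$ combines disjoint partial assignments;
\item smoothness, so that each model's product ranges over all variables.
\end{itemize}
This is where the main obstacle lies. Without smoothness, a model of a $\lor$-child missing variable $X$ would lose the factor $\alpha^{\pm}(x)\oplus\alpha^{\pm}(\overline{x})$. I would handle this by first smoothing $\varphi$ in polynomial time, adding gadgets $(v\lor\neg v)$ for missing variables $v$ under each $\lor$-child. This yields an equivalent smooth $\mathtt{d\text{-}DNNF}$ of size $O(|\varphi|\cdot 2n)$.

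A caveat is that $\mathtt{AMC}$ on $\mathtt{d\text{-}DNNF}$ over a general semiring also requires the neutral-sum property $\alpha(v)\oplus\alpha(\neg v)=e^{\otimes}$, or else explicit smoothing. Smoothing addresses this, though each gadget then contributes $\alpha^{\pm}(v)\oplus\alpha^{\pm}(\overline{v})$, which is the correct factor. For idempotent (optimization) semirings, determinism is unnecessary, but it is available here anyway.

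Finally, I would note that the number of semiring operations is linear in the smoothed circuit size. Each operation is assumed polynomial-time; for instance, $\mathtt{CTC}$ vectors have length $n+1$.
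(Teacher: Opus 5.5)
Your proposal is correct and follows the paper's proof. Both use the same attribution $\alpha^{\pm}$, with $\alpha^{\pm}(x_i^+)=\alpha(x_i)$, $\alpha^{\pm}(x_i^-)=\alpha(\overline{x_i})$ and $e^{\otimes}$ on negated literal variables, the same product identity on pure instances, and the same reindexing through the bijection $\chi \mapsto \chi^{\bullet}$ from $\mathtt{mods}(f^{\pm})$ onto $\mathtt{ips}(f)$. The paper simply invokes tractable $\mathtt{AMC}$ on $\mathtt{d\text{-}DNNF}$, whereas you spell out the bottom-up evaluation and the smoothing step it needs, which is correct added detail.
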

Consequently, counting implicants or selecting a preferred implicant can be done in polynomial time.

\begin{proof}
    The proof relies on the correspondence between the implicants of $f$ and the models of $f^{\pm}$ to create a reduction from $\mathtt{AMC_{ips}[d\text{-}DNNF^{\pm}]}$ to $\mathtt{AMC_{mods}[d\text{-}DNNF]}$. Let us remind that given a semiring $\mathcal{SR}=(\mathcal{A}, \oplus, \otimes, e^{\oplus}, e^{\otimes})$ and an attribution function $\alpha: \mathtt{lits}(\mathbf{X}) \to \mathcal{A}$, the solution to the algebraic counting query $(\mathtt{imps}, f, \mathcal{SR}, \alpha)$ is:
    \begin{equation*}
        \alpha(\mathtt{imps}, f) = \bigoplus_{\mathbf{t} \in \mathtt{imps}(f)} \bigotimes_{l \in \mathbf{t}} \alpha(l).
    \end{equation*}

    Let us note $\alpha^{\pm} : \mathtt{lits}(\mathbf{X}^{\pm}) \to \mathcal{A}$ such that:
    \begin{equation*}
        \begin{split}
            \forall i \in [n], \quad & \alpha^{\pm}(x_i^+)=\alpha(x_i), \\
            & \alpha^{\pm}(x_i^-)=\alpha(\overline{x_i}), \\
            & \alpha^{\pm}(\overline{x_i^+})=\alpha^{\pm}(\overline{x_i^-})=e^{\otimes}.
        \end{split}
    \end{equation*}
    Notice that for any pure instance $\chi$ on $\mathbf{X}^{\pm}$:
    $$ \bigotimes_{l \in \chi} \alpha^{\pm}(l) = \bigotimes_{l \in \chi^{\bullet}} \alpha(l)$$
    Since the models $\chi$ of $f^{\pm}$ are pure by definition and correspond exactly to the implicants $\chi^{\bullet}$ of $f$, we have:
    \begin{equation*}
        \begin{split}
            \alpha^{\pm}(\mathtt{mods}, f^{\pm}) & = \bigoplus_{\chi \in \mathtt{mods}(f^{\pm})} \bigotimes_{l \in \chi} \alpha^{\pm}(l) \\
             & = \bigoplus_{\chi \in \mathtt{mods}(f^{\pm})} \bigotimes_{l \in \chi^{\bullet}} \alpha(l) \\
             & = \bigoplus_{\mathbf{t} \in \mathtt{imps}(f)} \bigotimes_{l \in \mathbf{t}} \alpha(l) \\
             & = \alpha(\mathtt{imps}, f)
        \end{split}
    \end{equation*}

    Therefore, if a $\mathtt{d\text{-}DNNF}$ representation $\varphi$ of $f^{\pm}$ is provided, $\alpha(\mathtt{imps}, f) = \alpha^{\pm}(\mathtt{mods}, \varphi)$ can be computed in polynomial time.
\end{proof}

\paragraph{Preferred abductive explanations.}
Instead of reasoning on all implicants of a function, computing specific abductive explanations only requires to reason about implicants that are consistent with a given input instance. The set of considered implicants being restricted, one might expect queries to be easier. Unfortunately, as for computing a shorter implicant, computing a shorter reason for a decision $(\mathbf{x}, f)$ remains hard, even for an $\mathtt{OBDD}$ representation of $f$.

\begin{lemma} \label{lemma:vc}
    Given an undirected graph $G=(V,E)$ where $V=[n]$, we can construct in polynomial time an $\mathtt{OBDD}$ $\phi$ of size $\mathcal{O}(n^2)$ such that:
    \begin{itemize}
        \item $\mathbf{1}_{[n]}$ is a model of $\phi$
        \item a subset $S \subset V$ of variables is a vertex cover of $G$ iff $\mathbf{1}_S$ is a weak abductive explanation for $(\mathbf{1}_{[n]}, \phi)$
    \end{itemize}
\end{lemma}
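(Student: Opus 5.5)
The idea is to choose $\phi$ so that the instances that matter for deciding whether $\mathbf{1}_S$ is an abductive explanation are exactly those setting some pair of variables to $0$, and to make $\phi$ reject such an instance precisely when the pair is an edge of $G$. For an instance $\mathbf{y}\in\{0,1\}^n$, let $Z(\mathbf{y})=\{i \mid y_i=0\}$ be its zero-set. I define $\phi(\mathbf{y})=1$ iff either $|Z(\mathbf{y})|\geq 3$, or $|Z(\mathbf{y})|\leq 2$ and $Z(\mathbf{y})$ is an independent set of $G$. Concretely, for $|Z(\mathbf{y})|\leq 2$ this means $Z(\mathbf{y})=\emptyset$, or $Z(\mathbf{y})$ is a singleton, or $Z(\mathbf{y})=\{i,j\}$ with $\{i,j\}\notin E$. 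The $|Z|\geq 3$ branch is included only to keep the diagram narrow. Note that the plain $2$-$\mathtt{CNF}$ $\bigwedge_{\{i,j\}\in E}(x_i\lor x_j)$ would give the same equivalence but need not have a polynomial $\mathtt{OBDD}$.

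\textbf{Step 1 (model).} $Z(\mathbf{1}_{[n]})=\emptyset$, so $\phi(\mathbf{1}_{[n]})=1$.

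\textbf{Step 2 (correspondence).} The term $\mathbf{1}_S=\{x_i \mid i\in S\}$ is consistent with $\mathbf{1}_{[n]}$. It is therefore an abductive explanation iff every extension of it is a model of $\phi$. The extensions of $\mathbf{1}_S$ are exactly the instances $\mathbf{y}$ with $Z(\mathbf{y})\subseteq V\setminus S$.

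Suppose $S$ is not a vertex cover. Then some edge $\{i,j\}$ lies inside $V\setminus S$. The instance whose zero-set is exactly $\{i,j\}$ extends $\mathbf{1}_S$, has $|Z|=2$, and its zero-set is an edge, so $\phi$ rejects it. Hence $\mathbf{1}_S$ is not an implicant.

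Conversely, suppose $S$ is a vertex cover. Then $V\setminus S$ is independent, and so is every subset of it. Consequently every extension either has $|Z|\geq 3$ or has a small independent zero-set, and is accepted in both cases. Hence $\mathbf{1}_S$ is an implicant.

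\textbf{Step 3 (the $\mathtt{OBDD}$).} I use the order $X_1<\dots<X_n$ and give a layered construction. The node at level $k$ records only the information seen so far among $x_1,\dots,x_{k-1}$, in one of the following states:
\begin{itemize}
\item no zero yet;
\item exactly one zero, at position $i$ (one state for each $i$, so at most $n$ states);
\item exactly two zeros, forming a non-edge (``good'');
\item exactly two zeros, forming an edge (``bad'');
\item at least three zeros.
\end{itemize}

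The transitions are as follows. Reading a $1$ keeps the current state. Reading a $0$ at position $k$ moves as follows:
\begin{itemize}
\item from ``none'' to ``one at $k$'';
\item from ``one at $i$'' to ``good'' or ``bad'', according to whether $\{i,k\}\in E$;
\item from either ``two'' state to ``$\geq 3$'';
\item from ``$\geq 3$'' to itself.
\end{itemize}

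After $X_n$, the diagram accepts every state except ``bad''. The ``$\geq 3$'' state can be redirected straight to $\top$. Each level has $O(n)$ nodes and there are $n$ levels, giving size $O(n^2)$. Computing each transition requires only an adjacency test, so the construction takes polynomial time.

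The only delicate point is this size bound. It is why the definition of $\phi$ saturates at three zeros instead of checking full independence of the zero-set, which would require remembering arbitrarily large sets. Step 2 shows that this saturation does not affect the equivalence, because any violation is already witnessed by an extension with exactly two zeros.
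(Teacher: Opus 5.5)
Your proposal is correct and is essentially the paper's proof: your $\phi$ rejects exactly the instances whose zero-set is an edge, so it is the paper's $\neg f^G$ (where $\mathtt{mods}(f^G)=\{\mathbf{x}^{(i,j)} \mid (i,j)\in E\}$), and your argument on extensions is a direct version of the paper's implicate/implicant duality step. The only difference is that you build the $\mathtt{OBDD}$ directly as a width-$\mathcal{O}(n)$ layered diagram, instead of compiling $f^G$ from its $|E|$ models and negating; this has the advantage of actually justifying the claimed $\mathcal{O}(n^2)$ size, whereas generic compilation from $|E|$ models would a priori only guarantee $\mathcal{O}(n\,|E|)$.
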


\begin{proof}
    The proof of this Lemma is adapted from the reduction of enumerating minimal hypergraph transversals to enumerating sufficient reasons of an $\mathtt{OBDD}$ \citep{colnet_complexity_2022}.

    Assume an undirected graph $G=(V,E)$ where $V=[n]$ and $E \subset V^2$. For each edge $(i,j) \in E$, we note $\mathbf{x}^{(i,j)}$ the instance on boolean variables $\mathbf{X}= \{X_1,...,X_n \}$ such that $x^{(i,j)}_k = 1$ if $k \not\in \{i,j\}$ and $0$ otherwise. We define the function $f^G$ on variables $\mathbf{X}$ such that $\mathtt{mods}(f^G) = \{\mathbf{x}^{(i,j)} | (i,j) \in E\}$ and we note $\delta^S = \bigvee_{i \in S} \neg X_i$ for $ S \subset [n]$. Notice that $\mathbf{x}^{(i,j)}$ is a model of $\delta^S$ iff $i \in S$ or $j \in S$. Therefore we have:
    \begin{equation} \label{eq:vcimplicates}
        \begin{split}
            f^G \models \delta^S & \iff \forall (i,j) \in E, \mathbf{x}^{(i,j)} \models \delta^S \\
            & \iff \forall (i,j) \in E, i \in S \mbox{ ou } j \in S \\
            & \iff S \mbox{ is a vertex cover of } G
        \end{split}
    \end{equation}
    In other words, the set of negative implicates of $f^G$ (\ie which only contains negative literals) correspond to vertex covers of $G$. It is known by duality that the implicants of $\neg f^G$ are exactly the negations of the implicates of $f^G$. Since the negation of a negative clause is a positive term, we have:
    \begin{equation} \label{eq:vcimplicants}
        \begin{split}
             \mathbf{1}_S \models \neg f^G & \iff f^G \models \delta^S \\
            & \iff S \mbox{ is a vertex cover of } G
        \end{split}
    \end{equation}
    Besides, notice that $\mathbf{1}_{[n]}$ is a model of $\neg f^G$ (the set $V$ to which it corresponds is clearly a vertex cover of $G$). Therefore, positive implicants of $\neg f^G$ correspond exactly to weak abductive explanations of $(\mathbf{1}_{[n]}, \neg f^G)$. Hence, a subset $S \subset V$ of variables is a vertex cover of $G$ iff $\mathbf{1}_S$ is a weak abductive explanation of $(\mathbf{1}_{[n]}, \phi)$.

    Finally, since $f^G$ has at most a quadratic number of models, it can be compiled in polynomial time into an $\mathtt{OBDD}$, which then can be negated in polynomial time to yield an $\mathtt{OBDD}$ representing $\neg f^G$.
\end{proof}

Reasoning about the vertex covers of a graph is notoriously hard. In particular, finding a minimum size vertex cover belongs to the list of canonical \textbf{NP}-hard optimization problems mentioned in \citep{karp_reducibility_1972}. Therefore, the correspondence proven in Lemma \ref{lemma:vc} directly provides a proof of \textbf{NP}-hardness for finding a minimum size sufficient reason.


\begin{corollary} \label{cor:ssrobdd}
    Computing a shortest abductive explanation for any model of an $\mathtt{OBDD}$ ($\mathtt{ST_{axp}[OBDD]}$) is \textbf{NP}-hard.
\end{corollary}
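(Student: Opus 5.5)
We reduce the minimum vertex cover problem, which is \textbf{NP}-hard \citep{karp_reducibility_1972}, to $\mathtt{ST_{axp}[OBDD]}$, using the construction of Lemma \ref{lemma:vc}. Given an undirected graph $G=(V,E)$ with $V=[n]$ and an integer $k$, we build in polynomial time the $\mathtt{OBDD}$ $\phi$ of size $\mathcal{O}(n^2)$ provided by Lemma \ref{lemma:vc}, for which $\mathbf{1}_{[n]}$ is a model. We then issue one $\mathtt{ST_{axp}[OBDD]}$ query on the decision $(\mathbf{1}_{[n]}, \phi)$ and compare the size of the returned shortest abductive explanation with $k$.

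The core step is to identify abductive explanations of $(\mathbf{1}_{[n]}, \phi)$ with terms of the form $\mathbf{1}_S$. By definition, an abductive explanation is an implicant of $\phi$ that is consistent with $\mathbf{1}_{[n]}$. Consistency with the all-ones instance forces every literal of the term to be positive. Hence the abductive explanations are exactly the positive terms $\mathbf{1}_S$, for $S \subset [n]$, that are implicants of $\phi$. These are precisely the weak abductive explanations that appear in Lemma \ref{lemma:vc}; we read ``weak abductive explanation'' as ``abductive explanation'' in the sense of our definition.

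By Lemma \ref{lemma:vc}, $\mathbf{1}_S$ is such an explanation iff $S$ is a vertex cover of $G$. The size of $\mathbf{1}_S$ is $|S|$, so this correspondence is size-preserving. A shortest abductive explanation $\mathbf{1}_{S^*}$ therefore yields a minimum vertex cover $S^*$, and $G$ has a vertex cover of size at most $k$ iff $|S^*| \leq k$. The reduction is polynomial, which proves \textbf{NP}-hardness.

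The main obstacle is a mismatch in the proof of Lemma \ref{lemma:vc}: it establishes the correspondence for $\neg f^G$, whereas its statement refers to $\phi$. We must check that $\phi$ is taken to be an $\mathtt{OBDD}$ for $\neg f^G$. This is justified because $f^G$ has $|E|$ models, so it compiles into an $\mathtt{OBDD}$ in polynomial time, and $\mathtt{OBDD}$ negation, which swaps the two sinks, is linear. With $\phi$ fixed this way, everything above goes through.
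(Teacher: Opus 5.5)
Your proof is correct and follows the paper's argument. Both reduce minimum vertex cover to the query, via the $\mathtt{OBDD}$ for $\neg f^G$ from Lemma \ref{lemma:vc}, and both use the size-preserving correspondence between abductive explanations $\mathbf{1}_S$ of $(\mathbf{1}_{[n]}, \phi)$ and vertex covers $S$. Your two added checks make explicit what the paper's proof leaves implicit: that consistency with $\mathbf{1}_{[n]}$ forces positive terms, and that $\phi$ must be the negated $\mathtt{OBDD}$ of $f^G$.
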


\begin{proof}
    The proof works by reduction from the minimum size vertex cover ($\mathtt{MVC}$) problem. For any undirected graph $G=(V,E)$ where $V=[n]$ and $E \subset V^2$, build an $\mathtt{OBDD}$ $\phi$ as defined in Lemma \ref{lemma:vc}. The set of vertex covers of $G$ corresponds exactly to the set of abductive explanations of $(\mathbf{1}_{[n]}, \phi)$. Moreover, the size of a vertex cover is exactly that of its corresponding sufficient reason. Therefore, finding a minimum size vertex cover can be reduced to finding a minimum size sufficient reason for $(\mathbf{1}_{[n]}, \phi)$. Since $\mathtt{MVC}$ is \textbf{NP}-hard, then $\mathtt{ST_{srs}[OBDD]}$ is also \textbf{NP}-hard.
\end{proof}

Similarly to reasoning on implicants, any query that can be reduced to $\mathtt{ST_{axp}[OBDD]}$ in Figure \ref{fig:reductions} is therefore \textbf{NP}-hard as well on abductive explanations when the input formula is given as an $\mathtt{OBDD}$.

Again, assuming a representation of $f^{\pm}$ that offers polynomial support for algebraic model counting and conditioning, the transformation from $f^{\pm}$ to $f^{\mathbf{x}}$ proven in Theorem \ref{th:pm2x} and the correspondence between the models of $f^{\mathbf{x}}$ and the abductive explanations for $(\mathbf{x}, f)$ allows us to answer any algebraic counting query on implicants.


\begin{theorem} \label{th:aaxpc}
    Answering any algebraic counting query on the abductive explanations of a model $\mathbf{x}$ of a boolean function $f$ given a $\mathtt{d\text{-}DNNF}$ representation $\varphi$ of $f^{\pm}$ can be done in polynomial time in $|\varphi|$.
\end{theorem}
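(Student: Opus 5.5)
The plan mirrors the proof of Theorem~\ref{th:aipc}, with conditioning inserted first. Given a $\mathtt{d\text{-}DNNF}$ representation $\varphi$ of $f^{\pm}$ and a model $\mathbf{x}$ of $f$, I will build the pure term $\delta_{\mathbf{x}}$ from the proof of Theorem~\ref{th:pm2x}. It contains $\overline{x_i^+}$ when $x_i=0$ and $\overline{x_i^-}$ when $x_i=1$. I then condition $\varphi$ on $\delta_{\mathbf{x}}$. Conditioning preserves decomposability and determinism: literals are replaced by constants and the circuit structure is unchanged. So $\varphi|\delta_{\mathbf{x}}$ is a $\mathtt{d\text{-}DNNF}$ of size at most $|\varphi|$, computed in linear time, over the variables $\mathbf{X}^{\pm}\setminus\mathtt{var}(\delta_{\mathbf{x}})$.

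Next I will make the correspondence with abductive explanations explicit. A pure instance $\chi$ extending $\delta_{\mathbf{x}}$ is a model of $f^{\pm}$ iff $\chi^{\bullet}$ is an implicant of $f$ that is consistent with $\mathbf{x}$, that is, iff $\chi^{\bullet}\in\mathtt{axp}(f,\mathbf{x})$. Write $\chi'$ for the restriction of $\chi$ to the remaining variables. The map $\chi'\mapsto\chi^{\bullet}$ is therefore a bijection between $\mathtt{mods}(\varphi|\delta_{\mathbf{x}})$ and $\mathtt{axp}(f,\mathbf{x})$. The renaming to $\mathbf{S}$ from Theorem~\ref{th:pm2x} is not needed here; I can work directly on the remaining variables of $\mathbf{X}^{\pm}$.

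Then I will define the attribution $\alpha^{\pm}$ exactly as in Theorem~\ref{th:aipc}, restricted to the remaining variables: $\alpha^{\pm}(x_i^+)=\alpha(x_i)$, $\alpha^{\pm}(x_i^-)=\alpha(\overline{x_i})$, and every negative literal is sent to $e^{\otimes}$. For each model $\chi'$ this gives $\bigotimes_{l\in\chi'}\alpha^{\pm}(l)=\bigotimes_{l\in\chi^{\bullet}}\alpha(l)$, because the literals of $\delta_{\mathbf{x}}$ that were removed are negative and would only have contributed $e^{\otimes}$. Summing over models through the bijection yields
\[
\alpha^{\pm}(\mathtt{mods},\varphi|\delta_{\mathbf{x}})=\bigoplus_{\mathbf{t}\in\mathtt{axp}(f,\mathbf{x})}\bigotimes_{l\in\mathbf{t}}\alpha(l)=\alpha(\mathtt{axp},f,\mathbf{x}).
\]
The left-hand side is a standard algebraic model counting query on a $\mathtt{d\text{-}DNNF}$, computed bottom-up in polynomial time.

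The main obstacle is justifying that this bottom-up evaluation is correct. On $\mathtt{d\text{-}DNNF}$ it is only correct if the circuit is also smooth: a $\lor$-node whose children mention different variables would otherwise lose the factors of the missing variables. I will smooth the circuit in polynomial (quadratic) time. For each missing variable $V$ I add a gadget $(V\lor\neg V)$, which under $\alpha^{\pm}$ evaluates to $\alpha^{\pm}(v)\oplus\alpha^{\pm}(\overline{v})$. This is the same issue already implicit in Theorem~\ref{th:aipc}, so I will invoke it in the same way. For the optimisation semirings, whose $\oplus$ is idempotent, no further neutrality conditions arise.
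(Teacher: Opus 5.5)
Your proposal is correct and follows essentially the same route as the paper: you condition $\varphi$ on $\delta_{\mathbf{x}}$ as in Theorem~\ref{th:pm2x}, then reuse the attribution $\alpha^{\pm}$ and the model/implicant correspondence from Theorem~\ref{th:aipc}. The only differences are that you skip the purely cosmetic renaming to the selector variables $\mathbf{S}$, and that you make explicit the smoothing step needed for bottom-up algebraic model counting on a $\mathtt{d\text{-}DNNF}$, which the paper leaves implicit in both theorems.
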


\begin{proof}
    We first use Theorem \ref{th:pm2x} to compute a $\mathtt{d\text{-}DNNF}$ representation of $f^{\mathbf{x}}$ from $\varphi$. The rest of the proof is identical to that of Theorem \ref{th:aipc} by replacing $\mathtt{imps}(f)$ with $\mathtt{axp}(f, \mathbf{x})$ and $f^{\pm}$ with $f^{\mathbf{x}}$.
\end{proof}

\paragraph{Enumeration.}
Similarly to optimization queries, enumeration queries do not become easier when the set to enumerate shrinks from all prime implicants to prime implicants explanations of a specific model. Indeed, \cite{colnet_complexity_2022} showed that enumerating prime implicants of a $\mathtt{dec\text{-}DNNF}$ ($\mathtt{Enum_{pips}[dec\text{-}DNNF]}$) can be done with incremental polynomial delay. In contrast, it showed that enumerating the prime implicant explanations of an arbitrary model of an $\mathtt{OBDD}$ ($\mathtt{Enum_{srs}[OBDD]}$) is at least as hard as enumerating the minimal transversals of a hypergraph, an enumeration problem whose membership to \textbf{OutputP} is a longstanding open question in complexity theory.

We prove below that the enumeration of sufficient reasons can be done with incremental polynomial delay provided an adequate representation of the dual-rail encoding.
\begin{theorem} \label{th:enumsrspm}
    Enumerating the sufficient reasons for any model $\mathbf{x}$ of a boolean function $f$ given a $\mathtt{dec\text{-}DNNF}$ representation $\varphi$ of $f^{\pm}$ can be done with incremental polynomial delay in $|\varphi|$.
\end{theorem}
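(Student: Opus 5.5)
The plan is to reduce the statement to the result of \cite{colnet_complexity_2022} recalled just above, namely that $\mathtt{Enum_{pips}[dec\text{-}DNNF]}$ can be solved with incremental polynomial delay. The bridge is Theorem~\ref{th:pips2sr}: the sufficient reasons for $(\mathbf{x}, f)$ are in bijection with the prime implicants of $f^{\mathbf{x}}$, via $\mathbf{r} \mapsto \mathbf{x_r}$. It therefore suffices to obtain, in polynomial time from $\varphi$, a $\mathtt{dec\text{-}DNNF}$ representation of $f^{\mathbf{x}}$ whose size is polynomial in $|\varphi|$. We then run the prime implicant enumerator on it and translate each output $\mathbf{r}$ into $\mathbf{x_r}$. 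This translation costs $\mathcal{O}(n)$ per solution, so the incremental polynomial delay is preserved.

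The first step is to invoke Theorem~\ref{th:pm2x}. This requires checking that $\mathtt{dec\text{-}DNNF}$ supports conditioning in polynomial time while staying inside the language. Conditioning on a literal replaces leaves and decision nodes on that variable by constants and simplifies. Decomposability is clearly kept, and each decision node on a conditioned variable collapses to one of its branches. So $\varphi|\delta_{\mathbf{x}}$ is a $\mathtt{dec\text{-}DNNF}$ of size at most $|\varphi|$.

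The second step is the renaming $X_i^{+} \mapsto S_i$ when $x_i=1$, and $X_i^{-} \mapsto S_i$ when $x_i=0$. As noted in the proof of Theorem~\ref{th:pm2x}, exactly one of $X_i^{+}, X_i^{-}$ survives after conditioning on $\delta_{\mathbf{x}}$. The renaming is therefore injective on the remaining variables, and it preserves decomposability and the decision structure. This gives a $\mathtt{dec\text{-}DNNF}$ $\psi$ for $f^{\mathbf{x}}$ with $|\psi| \le |\varphi|$.

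I expect the main obstacle to be the precise hypothesis of the enumeration algorithm from \cite{colnet_complexity_2022}. I need to confirm that it applies to an arbitrary $\mathtt{dec\text{-}DNNF}$, and that its delay is polynomial in the circuit size and not, say, in the number of variables of the original function. If that algorithm needs extra structure, I would instead exploit the monotonicity of $f^{\mathbf{x}}$, which the earlier Lemma establishes. For a monotone function, the prime implicants are exactly the minimal models, and these can be enumerated directly. Starting from a model, I would greedily drop selector variables while the model remains a model, which needs only polynomial-time model checking on $\psi$. To get incrementality, I would use a flashlight or backtracking search that asks whether some minimal model extends a given partial choice of included and excluded selectors. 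This question reduces to consistency of $\psi$ conditioned on the excluded selectors, combined with a subset check against the already enumerated solutions. Consistency and conditioning are polynomial on $\mathtt{dec\text{-}DNNF}$. Either route yields incremental polynomial delay in $|\varphi|$.
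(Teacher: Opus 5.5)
Your main route is exactly the paper's proof. The paper obtains the result in one line from Theorems~\ref{th:pips2sr} and~\ref{th:pm2x} together with the incremental-polynomial enumerator for prime implicants of a $\mathtt{dec\text{-}DNNF}$ from \cite{colnet_complexity_2022}. Your checks that conditioning on $\delta_{\mathbf{x}}$ and the injective renaming stay inside $\mathtt{dec\text{-}DNNF}$, with $|\psi| \le |\varphi|$, make explicit what the paper leaves implicit. That enumerator applies to arbitrary $\mathtt{dec\text{-}DNNF}$ circuits, with running time polynomial in the circuit size and in the number of solutions already output. So your concern about its hypotheses is unfounded, and no fallback is needed.

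You should nevertheless drop the fallback: as stated it does not work, and your claim that either route yields incremental polynomial delay is false for it.
\begin{itemize}
\item Consistency of $\psi$ conditioned on the excluded selectors only certifies that some minimal model avoids them. It says nothing about whether some minimal model contains the included selectors, because minimization may discard them.
\item Concretely, take $f = X_1$ and $\mathbf{x} = x_1 x_2$, so $f^{\mathbf{x}} \equiv s_1$. The partial choice that includes $S_2$ and excludes nothing passes your test. Yet the only prime implicant is $s_1$, which does not contain $s_2$.
\item Such a search can therefore wander through arbitrarily many dead branches, so no delay bound follows. The subset check against already enumerated solutions does not repair this.
\end{itemize}
Deciding whether a prescribed set of selectors extends to a minimal model is precisely the hard core of a flashlight search here. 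Already for explicitly given hypergraphs, deciding whether a vertex set is contained in some minimal transversal is NP-complete. This is why the result rests on the dedicated algorithm of \cite{colnet_complexity_2022}.
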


\begin{proof}
    The result comes directly from Theorems \ref{th:pips2sr} and \ref{th:pm2x} combined with the enumeration algorithm for the prime implicants of a $\mathtt{dec\text{-}DNNF}$ with incremental polynomial delay proved in \cite{colnet_complexity_2022}.
\end{proof}

\paragraph{Shapley values.}
Shapley values are a staple in the XAI literature \citep{letoffe_towards_2025}, although some of their limits have been pointed out \citep{huang_inadequacy_2023,marques-silva_explainability_2024}. Recently \citep{willot_2026} introduced a new Shapley value where the contribution of a coalition is based on abductive explanations: the score of the coalition is $1$ if selected literals form an abductive explanation for the decision, and $0$ otherwise.
\begin{definition}
    Given a boolean function $f$ and one of its model $\mathbf{x}$, the abductive Shapley value of a variable $X_i \in \mathbf{X}$ in the decision $(\mathbf{x}, f)$ is:
    \begin{equation}
        \zeta_i = \sum_{\mathbf{S} \subset [n] \setminus \{i\}} c(|\mathbf{S}|) \cdot (\nu(\mathbf{S} \cup \{i\}) - \nu(\mathbf{S}))
    \end{equation}
    where: $c(k) = \frac{k! \cdot (n - k -1)!}{n!}$ and $\nu(\mathbf{S})= \left\{\begin{array}{ll}
                                                                1 & \mbox{if } \mathbf{x_S} \models f \\
                                                                0 & \mbox{otherwise}
                                                                \end{array}\right.$
\end{definition}

A decomposition of the abductive Shapley value, similar to the one used on the standard Shapley value in \citep{arenas_tractability_2021}, shows that it can be computed if one has access to the number of abductive explanations of size $k$ that include (resp. do not include) variable $X_i$. As such counts can be computed in polynomial time given a $\mathtt{d\text{-}DNNF}$ representation of $f^{\pm}$, so can the abductive Shapley value.

\begin{theorem} \label{th:drshap}
    Given a $\mathtt{d\text{-}DNNF}$ $\varphi$ representing $f^{\pm}$ and a model $\mathbf{x}$ of $f$, one can compute the Shapley value $\zeta_i$ of a variable $X_i \in \mathbf{X}$ in the decision $(\mathbf{x}, f)$ for any $i \in [n]$ in polynomial time.
\end{theorem}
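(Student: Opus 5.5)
The plan is to reduce the Shapley computation to counting abductive explanations by size, with and without a fixed variable, and then to obtain those counts from a $\mathtt{d\text{-}DNNF}$ of $f^{\mathbf{x}}$ via the $\mathtt{CTC}$ semiring.

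First, by Theorem \ref{th:pm2x}, condition $\varphi$ on $\delta_{\mathbf{x}}$ and rename the remaining variables. This yields a $\mathtt{d\text{-}DNNF}$ $\psi$ of $f^{\mathbf{x}}$ over the selectors $\mathbf{S}$. Conditioning and renaming preserve determinism and decomposability, and the cost is polynomial in $|\varphi|$. By definition of $f^{\mathbf{x}}$, a selector instance $\mathbf{s}$ is a model of $\psi$ iff $\nu(\{j \mid s_j=1\})=1$. So coalitions with value $1$ correspond exactly to models of $\psi$, and the size of a coalition is the cardinality of its model. One technical point must be handled here. If some $S_j$ does not occur in $\psi$, it is unconstrained. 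I would make $\psi$ mention every $S_j$ by conjoining $(S_j \lor \neg S_j)$ decomposably, or simply account for free variables in the count.

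Second, I would rewrite $\zeta_i$ in terms of counts. Let $N^{+}_k$ be the number of coalitions $\mathbf{S} \subset [n]\setminus\{i\}$ with $|\mathbf{S}|=k$ and $\nu(\mathbf{S}\cup\{i\})=1$. Let $N^{-}_k$ be the number of such coalitions with $\nu(\mathbf{S})=1$. By linearity,
\[
\zeta_i = \sum_{k=0}^{n-1} c(k)\,\bigl(N^{+}_k - N^{-}_k\bigr).
\]
These numbers come from conditioning. $N^{+}_k$ is the number of cardinality-$k$ models of $\psi|s_i$, and $N^{-}_k$ is the number of cardinality-$k$ models of $\psi|\overline{s_i}$, both viewed over $\mathbf{S}\setminus\{S_i\}$. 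Conditioning preserves $\mathtt{d\text{-}DNNF}$. Each full vector $(N_0,\dots,N_{n-1})$ is then a single $\mathtt{CTC}$ query, i.e. an algebraic model count in the semiring $(\mathbb{N}^{n+1}, +^{n+1}, \bowtie^{n+1})$ with $\alpha(s_j)=\delta^1$ and $\alpha(\overline{s_j})=\delta^0$.

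Third, I need algebraic model counting on a $\mathtt{d\text{-}DNNF}$ to be polynomial for this semiring. The standard bottom-up evaluation (\citealp{kimmig_algebraic_2017}) computes $\oplus$ at deterministic $\lor$-nodes and $\otimes$ at decomposable $\land$-nodes. It is correct here because $+^{n+1}$ is idempotent-free but determinism prevents double counting. It requires smoothness, and smoothing a $\mathtt{d\text{-}DNNF}$ is polynomial (quadratic), so I would smooth first. Each node then costs $O(n^2)$ arithmetic operations on integers of at most $n$ bits, giving a total time polynomial in $|\varphi|$ and $n$. Finally I evaluate the displayed sum with the rational coefficients $c(k)$.

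I expect the main obstacle to be bookkeeping rather than anything conceptual. The delicate points are smoothness and variables that disappear after conditioning, which must be restored so that cardinalities are counted over the correct variable set $\mathbf{S}\setminus\{S_i\}$, and checking that the renaming in Theorem \ref{th:pm2x} preserves determinism. My first approach would be to explicitly smooth $\psi|s_i$ and $\psi|\overline{s_i}$ with respect to $\mathbf{S}\setminus\{S_i\}$ before running $\mathtt{CTC}$.
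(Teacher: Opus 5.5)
Your proposal is correct and follows essentially the same route as the paper. You both obtain a $\mathtt{d\text{-}DNNF}$ of $f^{\mathbf{x}}$ via Theorem \ref{th:pm2x}, rewrite $\zeta_i$ as $\sum_k c(k)$ times a difference of size-stratified counts of abductive explanations with and without $X_i$, and get those counts from $\mathtt{CTC}$ queries on conditionings of $f^{\mathbf{x}}$. The differences are cosmetic: the paper works with $\eta(k)$ and $\eta(k,i)$ and recovers the counts excluding $X_i$ by subtraction, whereas you condition directly on $s_i$ and $\overline{s_i}$, which also avoids the paper's slip of attributing $\eta(k,i)$ to $f^{\mathbf{x}}|\overline{s_i}$; and you make explicit the smoothing and free-variable bookkeeping that the paper leaves implicit.
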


\begin{proof}
    Let us note $\eta(k)$ the number of abductive explanations of size $k$ and $\eta(k, i)$ the number of abductive explanations of size $k$ that include variable $X_i$. Notice that the number of abductive explanations of size $k$ that do not include variable $X_i$ is given by $\eta(k) - \eta(k, i)$. We then have the following decomposition:
    \begin{equation} \label{eq:shapdec}
    \begin{split}
        \psi_i & = \sum_{\mathbf{S} \subset [n] \setminus \{i\}} c(|\mathbf{S}|) \cdot (\nu(\mathbf{S} \cup \{i\}) - \nu(\mathbf{S})) \\
        & = \sum_{0 \leq k \leq n-1} c(k) \cdot \left(\sum_{\substack{\mathbf{S} \subset [n] \setminus \{i\}, \\ |\mathbf{S}|=k}} \nu(\mathbf{S} \cup \{i\} ) - \nu(\mathbf{S}) \right) \\
        & = \sum_{0 \leq k \leq n-1} c(k) \cdot (\eta(k+1, i) - \eta(k) + \eta(k, i)) \\
    \end{split}
    \end{equation}

    Hence, we can compute the abductive Shapley value if we have access to the counts $\eta(k)$ and $\eta(k, i)$.
    
    Notice that the cardinality of a model $\mathbf{s}$ of $f^{\mathbf{x}}$ is equal to the size of its corresponding abductive explanation $\mathbf{x_s}$. Therefore, computing the counts $\eta(k)$ (resp. $\eta(k, i)$) for $0 \leq k \leq n$ strictly corresponds to solving $\mathtt{CTC}$ on $f^{\mathbf{x}}$ (resp. $f^{\mathbf{x}} | \overline{s_i}$). Since a $\mathtt{d\text{-}DNNF}$ representation of $f^{\mathbf{x}}$ can be derived from $\varphi$ in polynomial time (see Theorem \ref{th:pm2x}), then counting $\eta(k)$ and $\eta(k, i)$ can be done in polynomial time.

    Finally, we can compute the abductive Shapley value in polynomial time in the size of $\phi$.
\end{proof}

Although we do not know of a formal proof of \#\textbf{P}-hardness on this specific Shapley value, the following observation regarding the complexity of counting abductive explanations ensures that the method used in the proof of Theorem \ref{th:drshap} is out of reach even for an $\mathtt{OBDD}$ representation of $f$.

\begin{theorem} \label{th:ctaxp}
    Given an $\mathtt{OBDD}$ $\phi$ and one of its model $\mathbf{x}$, counting abductive explanations of $(\mathbf{x}, \phi)$ ($\mathtt{CT_{axp}[OBDD]}$) is \textbf{\#P}-hard.
\end{theorem}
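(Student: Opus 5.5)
We reuse the construction of Lemma \ref{lemma:vc}. Given an undirected graph $G=(V,E)$ with $V=[n]$, we build in polynomial time the $\mathtt{OBDD}$ $\phi$ representing $\neg f^G$. By that lemma, $\mathbf{1}_{[n]}$ is a model of $\phi$. Moreover, a subset $S \subset V$ is a vertex cover of $G$ iff $\mathbf{1}_S$ is an abductive explanation for $(\mathbf{1}_{[n]}, \phi)$.

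Every abductive explanation for the model $\mathbf{1}_{[n]}$ is consistent with it, and hence consists only of positive literals. So it is of the form $\mathbf{1}_S$ for a unique $S \subset V$. The map $S \mapsto \mathbf{1}_S$ is therefore a bijection between the vertex covers of $G$ and $\mathtt{axp}(\phi, \mathbf{1}_{[n]})$. It follows that a single $\mathtt{CT_{axp}[OBDD]}$ query on $(\phi,\mathbf{1}_{[n]})$ returns exactly the number of vertex covers of $G$. This is a parsimonious polynomial-time reduction.

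It remains to invoke the \textbf{\#P}-hardness of counting vertex covers, which is equivalent to counting independent sets since complements of vertex covers are independent sets. This is a classical result (Valiant; Provan and Ball), and it remains \#\textbf{P}-complete even for bipartite or bounded-degree graphs.

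The main point to check carefully is the bijection. Lemma \ref{lemma:vc} is stated in terms of ``weak abductive explanations'' of the form $\mathbf{1}_S$, so we must confirm that no other abductive explanations are counted. Consistency with $\mathbf{1}_{[n]}$ settles this, since it forbids negative literals. We must also confirm that $\phi$ has polynomial size, which holds because $f^G$ has $|E|$ models and $\mathtt{OBDD}$s are closed under negation in polynomial time.
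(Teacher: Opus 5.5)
Your proposal is correct and follows essentially the same route as the paper: a parsimonious reduction from counting vertex covers, using the $\mathtt{OBDD}$ for $\neg f^G$ from Lemma \ref{lemma:vc} together with the model $\mathbf{1}_{[n]}$. You also state explicitly a step the paper leaves implicit: consistency with $\mathbf{1}_{[n]}$ forces every abductive explanation to be a positive term, which is what makes the correspondence with vertex covers a genuine bijection.
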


\begin{proof}
    Similarly to the proof of Corollary \ref{cor:ssrobdd}, this proof works by reduction from the problem \#$\mathtt{VC}$ of counting the vertex covers of a graph, a canonical \textbf{\#P}-hard problem \citep{valiant_complexity_1979}. For a given graph $G=(V,E)$, we build as earlier an $\mathtt{OBDD}$ such that the abductive explanations of the model $\mathbf{1}$ are in one-to-one correspondence with the set of vertex covers of $G$. Therefore, counting the vertex cover of a graph can be reduced to counting the abductive explanations for a model of an $\mathtt{OBDD}$. Therefore, we have proven that counting the abductive explanations for a model of a an $\mathtt{OBDD}$ is \textbf{\#P}-hard.
\end{proof}

\paragraph{Coverage-based explanations.}
Most explanation queries are designed under the hypothesis of variable independence. However, it is not uncommon for input variables to be tied to each other through a set of constraints that discards a subset of instances. Ignoring these input constraints can lead to superfluous and redundant explanations \citep{gorji_sufficient_2022,cooper_abductive_2023}. Therefore, new definitions of abductive explanations under constraints were introduced to cope with that phenomenon. In this section, we show how dual-rail encodings can help computing coverage-based explanations introduced in \cite{cooper_abductive_2023}.

For two boolean function $f$ and $g$, the coverage-based explanations of a model of $f$ under the input constraints $g$ is defined as follows.

\begin{definition}
    A weak abductive explanation for a model $\mathbf{x}$ of $f$ under constraints $g$ (wAXpc) is a sufficient reason for $(\mathbf{x}, f \lor \neg g)$.

    The \textbf{coverage} of a wAXpc $\mathbf{t}$ with respect to $g$ is:
    $$\mathtt{cov}_g(\mathbf{t})=\mathtt{mods}(\mathbf{t} \land f \land g)$$
    
    The \textbf{neighborhood} of $\mathbf{t}$ with respect to $g$ is the set of literals that are a direct consequence of $\mathbf{t}$ under $g$:
    $$\mathtt{nei}_g(\mathbf{t})=\{l \in \mathtt{lits}(\mathbf{X}) | \mathbf{t} \land g \models l\}$$

    A coverage-based PI-explanation for a model $\mathbf{x}$ of $f$ under constraints $g$ (CPI-Xp) is a wAXpc such that no other wAXpc has a strictly larger coverage with respect to $g$.
\end{definition}

It was shown in \cite{cooper_abductive_2023} that computing a CPI-Xp for the decision $(\mathbf{x}, f)$ under constraints $g$ can be done with a polynomial number of calls to an oracle that decides given a wAXpc if it is a CPI-Xp or not. We prove below that this test can be decided in poylnomial time if we have access to representations of $f^{\pm}$ and $g$ that share a common structure.

\begin{definition}
    A subset of variables from $\mathbf{X}^{\pm}$ is \textbf{balanced} if it contains exactly one variable between $X_i^+$ and $X_i^-$ for every $i \in [n]$. Let $v^{\pm}$ and $v$ be two vtrees on variables sets $\mathbf{X}^{\pm}$ and $\mathbf{X}$ respectively. We say that $v$ is the \textbf{base} of $v^{\pm}$ if for any balanced subset of variables $\mathbf{Z} \subset \mathbf{X}^{\pm}$, $v^{\pm}_{\mathbf{Z}} \simeq v$ where any variable $X_i^+$ or $X_i^-$ in $v^{\pm}_{\mathbf{Z}}$ is viewed as $X_i$ in $v$.
\end{definition}

By definition, if $\varphi$ is a structured $\mathtt{d\text{-}DNNF}$ on $\mathbf{X}^{\pm}$ respecting a vtree $v^{\pm}$ of base $v$, then for any instance $\mathbf{x} \in \mathcal{X}$, $\varphi | \delta_{\mathbf{x}}$ as defined in the proof of Theorem \ref{th:pm2x} is a structured $\mathtt{d\text{-}DNNF}$ respecting $v$ (where any variable $X_i^+$ or $X_i^-$ is viewed as $X_i$ in $v$).

We start with several observations before proving our main result.
\begin{lemma} \label{lem:nei}
    Given two wAXpc $\mathbf{t}$ and $\mathbf{h}$ of $(\mathbf{x}, f)$ under $g$:
    \begin{equation}
        \mathtt{cov}_g(\mathbf{t}) \subset \mathtt{cov}_g(\mathbf{h}) \iff \mathbf{h} \subset \mathtt{nei}_g(\mathbf{t})
    \end{equation}
\end{lemma}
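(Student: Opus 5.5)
The plan is to prove the two directions separately, working directly from the definitions of coverage and neighborhood. The only properties of wAXpc's I will use are that they are terms consistent with $\mathbf{x}$ and that they imply $f \lor \neg g$; the second gives $\mathbf{t} \land g \models f$, and hence
\[
\mathtt{cov}_g(\mathbf{t}) = \mathtt{mods}(\mathbf{t} \land f \land g) = \mathtt{mods}(\mathbf{t} \land g),
\]
and likewise for $\mathbf{h}$. So both coverages are simply the models of $g$ extending the respective term. This reduces the lemma to the purely logical equivalence
\[
\mathtt{mods}(\mathbf{t} \land g) \subset \mathtt{mods}(\mathbf{h} \land g) \iff \mathbf{h} \subset \mathtt{nei}_g(\mathbf{t}).
\]

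For $(\Leftarrow)$, assume every literal $l \in \mathbf{h}$ satisfies $\mathbf{t} \land g \models l$. Take any $\mathbf{y} \in \mathtt{mods}(\mathbf{t} \land g)$. Then $\mathbf{y}$ satisfies every literal of $\mathbf{h}$, so $\mathbf{y}$ extends $\mathbf{h}$. Since $\mathbf{y} \models g$ as well, we get $\mathbf{y} \in \mathtt{mods}(\mathbf{h} \land g)$.

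For $(\Rightarrow)$, assume the inclusion of coverages and let $l \in \mathbf{h}$. Every model of $\mathbf{t} \land g$ is a model of $\mathbf{h} \land g$, and in particular a model of $l$. Hence $\mathbf{t} \land g \models l$, that is, $l \in \mathtt{nei}_g(\mathbf{t})$.

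The main obstacle, mild as it is, is the case where $\mathbf{t} \land g$ is inconsistent. Then $\mathtt{cov}_g(\mathbf{t}) = \emptyset$, and $\mathtt{nei}_g(\mathbf{t})$ contains every literal, so both sides of the equivalence hold and it remains true. I will note this explicitly. I will also state the simplification $\mathtt{cov}_g(\mathbf{t}) = \mathtt{mods}(\mathbf{t} \land g)$ carefully, since it is the one place where the wAXpc hypothesis is used. If one prefers to avoid that hypothesis, the same argument goes through with $g$ replaced by $f \land g$ throughout, provided $\mathtt{nei}$ is read relative to $f \land g$.
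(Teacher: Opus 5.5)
Your proof is correct and rests on the same key fact as the paper's, namely that $\mathbf{t} \models f \lor \neg g$ lets you replace $\mathbf{t} \land f \land g$ by $\mathbf{t} \land g$; your $(\Leftarrow)$ direction is identical to the paper's. The only difference is organization: you state that fact up front as $\mathtt{cov}_g(\mathbf{t}) = \mathtt{mods}(\mathbf{t} \land g)$ and then prove $(\Rightarrow)$ directly, while the paper argues by contradiction, splits on whether $\mathtt{cov}_g(\mathbf{t} \cup \overline{l})$ is empty, and uses the fact only in the empty case. Only the hypothesis on $\mathbf{t}$ is actually needed, so also simplifying $\mathtt{cov}_g(\mathbf{h})$ is harmless but unnecessary.
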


\begin{proof}
    $(\implies)$ Let us assume that $\mathtt{cov}_g(\mathbf{t}) \subset \mathtt{cov}_g(\mathbf{h})$ and $\mathbf{h}$ contains a literal $l \not\in \mathtt{nei}_g(\mathbf{t})$. Let us note $\overline{l}$ the negation of the literal $l$. Since $\mathbf{t}$ is a wAXpc for $(\mathbf{x}, f)$ under $g$: $\mathbf{t} \models f \lor \neg g$. Thus we also have: $\mathbf{t} \cup \overline{l} \models f \lor \neg g$. We now reason on the size of $\mathtt{cov}_g(\mathbf{t} \cup \overline{l})$.

    \begin{enumerate}
        \item if $\mathtt{cov}_g(\mathbf{t} \cup \overline{l}) = \emptyset$, then $\mathbf{t} \cup \overline{l} \models \neg(f \land g)$. Hence, we have:
        \begin{equation}
        \begin{split}
            \mathbf{t} \cup \overline{l} & \models (f \lor \neg g) \land \neg(f \land g) \\
            & \models (f \lor \neg g) \land (\neg f \lor \neg g) \\
            & \models \neg g \\
        \end{split}
        \end{equation}
        Which is in contradiction with $l \not\in \mathtt{nei}_g(\mathbf{t})$.

        \item if $\mathtt{cov}_g(\mathbf{t} \cup \overline{l}) \ne \emptyset$, let us note $\mathbf{z} \in \mathtt{cov}_g(\mathbf{t} \cup \overline{l})$. We have $\mathbf{z} \in \mathtt{cov}_g(\mathbf{t})$ and $\mathbf{z} \not\in \mathtt{cov}_g(\mathbf{h})$ since $\overline{l} \in \mathbf{z}$ and $l \in \mathbf{h}$. Which is in contradiction with $\mathtt{cov}_g(\mathbf{t}) \subset \mathtt{cov}_g(\mathbf{h})$.
    \end{enumerate}

    Therefore we have:
    \begin{equation}
        \mathtt{cov}_g(\mathbf{t}) \subset \mathtt{cov}_g(\mathbf{h}) \implies \mathbf{h} \subset \mathtt{nei}_g(\mathbf{t})
    \end{equation}

    $(\Longleftarrow)$ Let us assume that $\mathbf{h} \subset \mathtt{nei}_g(\mathbf{t})$. Then, for any $\mathbf{z} \in \mathtt{cov}_g(\mathbf{t})$ we have: 
    \begin{equation}
        \begin{split}
            \mathbf{z} & \models \mathbf{t} \land f \land g \\
            & \models \mathbf{t} \land g \\
            & \models \mathtt{nei}_g(\mathbf{t}) \\
            & \models \mathbf{h} \\
        \end{split}
        \end{equation}
    Hence, in particular, $\mathbf{z} \in \mathtt{mods}(\mathbf{h} \land f \land g)=\mathtt{cov}_g(\mathbf{h})$.

    Therefore we have:
    \begin{equation}
        \mathbf{h} \subset \mathtt{nei}_g(\mathbf{t}) \implies \mathtt{cov}_g(\mathbf{t}) \subset \mathtt{cov}_g(\mathbf{h}) 
    \end{equation}
\end{proof}

\begin{corollary} \label{cor:eqcov}
    A wAXpc $\mathbf{t}$ is a CPI-Xp iff for any wAXpc $\mathbf{h} \subset \mathtt{nei}_g(\mathbf{t})$:
    $$\mathtt{cov}_g(\mathbf{t}) = \mathtt{cov}_g(\mathbf{h})$$
\end{corollary}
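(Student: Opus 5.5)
The plan is to derive Corollary \ref{cor:eqcov} directly from Lemma \ref{lem:nei} together with the definition of a CPI-Xp. Recall that $\mathbf{t}$ is a CPI-Xp iff it is a wAXpc and no wAXpc $\mathbf{h}$ satisfies $\mathtt{cov}_g(\mathbf{t}) \subsetneq \mathtt{cov}_g(\mathbf{h})$. We read ``strictly larger coverage'' as strict set inclusion, which is the sense in which Lemma \ref{lem:nei} is phrased.

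The first step is to rephrase the non-existence condition. For a wAXpc $\mathbf{h}$, strict inclusion $\mathtt{cov}_g(\mathbf{t}) \subsetneq \mathtt{cov}_g(\mathbf{h})$ holds iff two things are true: $\mathtt{cov}_g(\mathbf{t}) \subset \mathtt{cov}_g(\mathbf{h})$, and the two coverages are not equal. Lemma \ref{lem:nei} turns the first condition into $\mathbf{h} \subset \mathtt{nei}_g(\mathbf{t})$. So $\mathbf{t}$ fails to be a CPI-Xp iff some wAXpc $\mathbf{h} \subset \mathtt{nei}_g(\mathbf{t})$ has $\mathtt{cov}_g(\mathbf{h}) \neq \mathtt{cov}_g(\mathbf{t})$. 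Taking the contrapositive gives exactly the stated equivalence.

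It is cleanest to write out both directions.

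$(\Rightarrow)$ Suppose $\mathbf{t}$ is a CPI-Xp and let $\mathbf{h}$ be a wAXpc with $\mathbf{h} \subset \mathtt{nei}_g(\mathbf{t})$. By the $(\Longleftarrow)$ direction of Lemma \ref{lem:nei}, $\mathtt{cov}_g(\mathbf{t}) \subset \mathtt{cov}_g(\mathbf{h})$. If this inclusion were strict, $\mathbf{h}$ would contradict maximality of $\mathbf{t}$, so the coverages are equal.

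$(\Leftarrow)$ Suppose $\mathbf{t}$ is not a CPI-Xp. Then some wAXpc $\mathbf{h}$ has $\mathtt{cov}_g(\mathbf{t}) \subsetneq \mathtt{cov}_g(\mathbf{h})$. By the $(\Rightarrow)$ direction of Lemma \ref{lem:nei}, $\mathbf{h} \subset \mathtt{nei}_g(\mathbf{t})$, yet the coverages differ, which violates the hypothesis.

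There is no real obstacle here; this is essentially a rewriting of the definition. The only point needing care is that Lemma \ref{lem:nei} applies only when both $\mathbf{t}$ and $\mathbf{h}$ are wAXpc. The corollary quantifies exactly over wAXpc $\mathbf{h}$, and $\mathbf{t}$ is assumed to be a wAXpc, so the hypotheses of the lemma are met in both directions.
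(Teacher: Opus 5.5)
Your proof is correct and takes the same approach as the paper: it proves the contrapositive with the two directions of Lemma~\ref{lem:nei}, using $(\Rightarrow)$ to place a wAXpc $\mathbf{h}$ with strictly larger coverage inside $\mathtt{nei}_g(\mathbf{t})$, and $(\Longleftarrow)$ to turn $\mathtt{cov}_g(\mathbf{t}) \neq \mathtt{cov}_g(\mathbf{h})$ into a strict inclusion. Your reading of ``strictly larger coverage'' as strict set inclusion is also the one the paper's proof uses.
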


\begin{proof}
    We prove the contrapositive: a wAXpc $\mathbf{t}$ is not a CPI-Xp iff there is a wAXpc $\mathbf{h} \subset \mathtt{nei}_g(\mathbf{t})$ such that $\mathtt{cov}_g(\mathbf{t}) \ne \mathtt{cov}_g(\mathbf{h})$.

    First, if $\mathbf{t}$ is not a CPI-Xp, then there is another wAXpc $\mathbf{h}$ such that $\mathtt{cov}_g(\mathbf{t}) \varsubsetneq \mathtt{cov}_g(\mathbf{h})$. By Lemma \ref{lem:nei} we have that $\mathbf{h} \subset \mathtt{nei}_g(\mathbf{t})$.

    Now, if there is a wAXpc $\mathbf{h} \subset \mathtt{nei}_g(\mathbf{t})$ such that $\mathtt{cov}_g(\mathbf{t}) \ne \mathtt{cov}_g(\mathbf{h})$. By Lemma \ref{lem:nei} we have that $\mathtt{cov}_g(\mathbf{t}) \varsubsetneq \mathtt{cov}_g(\mathbf{h})$, therefore $\mathbf{t}$ is not a CPI-Xp.
\end{proof}

\begin{corollary} \label{cor:nei}
    A wAXpc $\mathbf{t}$ is a CPI-Xp iff:
    \begin{equation}
        \Psi(f, g, \mathbf{t}) \equiv \bigvee_{\substack{\mathbf{h} \subset \mathtt{nei}_g(\mathbf{t}), \\ \mathbf{h} \models f \lor \neg g}} \mathbf{h} \land f \land g \models \mathbf{t}
    \end{equation}
\end{corollary}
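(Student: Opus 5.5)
The plan is to derive Corollary \ref{cor:nei} directly from Corollary \ref{cor:eqcov} by rewriting the condition ``$\mathtt{cov}_g(\mathbf{t}) = \mathtt{cov}_g(\mathbf{h})$ for every wAXpc $\mathbf{h} \subset \mathtt{nei}_g(\mathbf{t})$'' as a single entailment between boolean functions. Throughout, I read ``wAXpc $\mathbf{h}$'' in the disjunction as an implicant of $f \lor \neg g$ consistent with $\mathbf{x}$. This is what the side condition $\mathbf{h} \models f \lor \neg g$ expresses. Consistency with $\mathbf{x}$ is automatic, since $\mathtt{nei}_g(\mathbf{t})$ only contains literals entailed by $\mathbf{t} \land g$, and $\mathbf{x} \models \mathbf{t} \land g$ whenever $\mathbf{x}$ is a model of both $f$ and $g$. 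If the paper really intends subset-minimal wAXpcs, I will note that every implicant of $f \lor \neg g$ contains a prime one with a larger or equal coverage, so the disjunction is unchanged up to entailment.

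First step: by Lemma \ref{lem:nei}, every $\mathbf{h} \subset \mathtt{nei}_g(\mathbf{t})$ that is a wAXpc satisfies $\mathtt{cov}_g(\mathbf{t}) \subset \mathtt{cov}_g(\mathbf{h})$. Hence the equality in Corollary \ref{cor:eqcov} is equivalent to the reverse inclusion $\mathtt{cov}_g(\mathbf{h}) \subset \mathtt{cov}_g(\mathbf{t})$. Second step: unfold coverages as model sets. We have $\mathtt{cov}_g(\mathbf{h}) \subset \mathtt{cov}_g(\mathbf{t})$ iff every model of $\mathbf{h} \land f \land g$ is a model of $\mathbf{t} \land f \land g$, iff $\mathbf{h} \land f \land g \models \mathbf{t}$; the conjuncts $f \land g$ on the right-hand side are already implied by the left-hand side. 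Third step: quantifying over all such $\mathbf{h}$ turns the family of entailments into one, because $\bigvee_i \psi_i \models \mathbf{t}$ iff $\psi_i \models \mathbf{t}$ for every $i$. This gives exactly $\Psi(f,g,\mathbf{t}) \models \mathbf{t}$. Chaining the equivalences with Corollary \ref{cor:eqcov} yields the statement in both directions.

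The main obstacle is the bookkeeping around what ``wAXpc'' ranges over in the disjunction, namely sufficient reasons versus arbitrary implicants of $f \lor \neg g$. The fix is the monotonicity remark: if $\mathbf{h}' \subset \mathbf{h}$ then $\mathtt{cov}_g(\mathbf{h}) \subset \mathtt{cov}_g(\mathbf{h}')$. So if some non-minimal $\mathbf{h}$ violates $\mathbf{h} \land f \land g \models \mathbf{t}$, a prime sub-implicant $\mathbf{h}' \subset \mathbf{h} \subset \mathtt{nei}_g(\mathbf{t})$ violates it too, and the two readings of the disjunction give the same entailment test. The rest is routine unfolding of definitions.
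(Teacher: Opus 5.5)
Your proposal is correct and follows essentially the paper's route: start from Corollary \ref{cor:eqcov}, reduce the equality of coverages to the inclusion $\mathtt{cov}_g(\mathbf{h}) \subset \mathtt{cov}_g(\mathbf{t})$, rewrite that inclusion as an entailment, and aggregate over $\mathbf{h}$ through the disjunction. The differences are minor: the paper takes the union of coverages first and relaxes the equality using that $\mathbf{t}$ itself lies in the family, leaving implicit the per-$\mathbf{h}$ use of Lemma \ref{lem:nei} that you make explicit, and your monotonicity remark reconciling the ``sufficient reason'' definition of wAXpc with the side condition $\mathbf{h} \models f \lor \neg g$ is extra care the paper does not take.
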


\begin{proof}
    First, by union over all wAXpc in $\mathtt{nei}_g(\mathbf{t})$, Corollary \ref{cor:eqcov} becomes: $\mathbf{t}$ is a CPI-Xp iff:
    $$ \mathtt{cov}_g(\mathbf{t}) = \bigcup_{\substack{\mathbf{h} \subset \mathtt{nei}_g(\mathbf{t}), \\ \mathbf{h} \models f \lor \neg g}} \mathtt{cov}_g(\mathbf{h}) $$

    Since $\mathbf{t} \subset \mathtt{nei}_g(\mathbf{t})$ and is a wAXpc, the condition can be relaxed to:
    $$ \mathtt{cov}_g(\mathbf{t}) \supset \bigcup_{\substack{\mathbf{h} \subset \mathtt{nei}_g(\mathbf{t}), \\ \mathbf{h} \models f \lor \neg g}} \mathtt{cov}_g(\mathbf{h}) $$

    Based on the definition of the coverage and switching from reasoning on models to reason about boolean formulas, the condition becomes:
    $$ \bigvee_{\substack{\mathbf{h} \subset \mathtt{nei}_g(\mathbf{t}), \\ \mathbf{h} \models f \lor \neg g}} \mathbf{h} \land f \land g \models \mathbf{t} \land f \land g $$

    Since $f \land g$ appears on both sides, the condition can be simplified to yield:
    $$ \bigvee_{\substack{\mathbf{h} \subset \mathtt{nei}_g(\mathbf{t}), \\ \mathbf{h} \models f \lor \neg g}} \mathbf{h} \land f \land g \models \mathbf{t} $$
\end{proof}

Therefore, if we can build a representation of $\Psi(f, g, \mathbf{t})$ that supports conditioning and consistency checks in polynomial time, then each literal in $\mathbf{t}$ can be tested and we can decide in polynomial time if $\mathbf{t}$ is a CPI-Xp. We show below that this can be done if we have access to structured representations of $(f \lor \neg g)^{\pm}$ and $g$ that share a common base.

\begin{theorem}
    Given a structured $\mathtt{d\text{-}DNNF}$ representation $\varphi$ of $(f \lor \neg g)^{\pm}$ respecting a vtree $v^{\pm}$ of base $v$, a structured $\mathtt{d\text{-}DNNF}$ representation $\psi$ of $g$ respecting $v$, a model $\mathbf{x} \in \mathtt{mods}(f \land g)$, and a wAXpc $\mathbf{t}$ for $(\mathbf{x}, f)$ under $g$, one can decide in polynomial time in $|\varphi| \cdot |\psi|$ if $\mathbf{t}$ is a CPI-Xp or not.
\end{theorem}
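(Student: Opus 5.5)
By Corollary \ref{cor:nei}, it suffices to build, in time polynomial in $|\varphi| \cdot |\psi|$, a representation of $\Psi(f, g, \mathbf{t})$ that supports conditioning and consistency checking. Then, for each literal $l \in \mathbf{t}$, we test whether $\Psi \land \overline{l}$ is consistent; $\mathbf{t}$ is a CPI-Xp iff all these tests fail. The construction has four steps.

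\textbf{Step 1: compute the neighbourhood.} We first compute $\mathtt{nei}_g(\mathbf{t})$. For each literal $l \in \mathtt{lits}(\mathbf{X})$, $l \in \mathtt{nei}_g(\mathbf{t})$ iff $\psi | (\mathbf{t} \cup \{\overline{l}\})$ is inconsistent. Conditioning and consistency are polynomial on $\mathtt{d\text{-}SDNNF}$, so this takes $O(n|\psi|)$ time. Since $\mathbf{x} \models g$ and $\mathbf{t} \subset \mathbf{x}$, the set $\mathtt{nei}_g(\mathbf{t})$ is a consistent term contained in $\mathbf{x}$. Every candidate $\mathbf{h} \subset \mathtt{nei}_g(\mathbf{t})$ is therefore a sub-term of $\mathbf{x}$, and is selected by a vector $\mathbf{s} \in \{0,1\}^n$ supported on $\mathtt{var}(\mathtt{nei}_g(\mathbf{t}))$.

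\textbf{Step 2: encode the admissible $\mathbf{h}$ on $\mathbf{S}$.} We condition $\varphi$ on $\delta_{\mathbf{x}}$ exactly as in the proof of Theorem \ref{th:pm2x}, obtaining a $\mathtt{d\text{-}SDNNF}$ for $(f \lor \neg g)^{\mathbf{x}}$ on $\mathbf{S}$. As observed after the definition of the base, it respects $v$ once $S_i$ is identified with $X_i$. We further condition it on $\overline{s_i}$ for every $X_i \notin \mathtt{var}(\mathtt{nei}_g(\mathbf{t}))$. Call the result $\rho(\mathbf{S})$. Its models are exactly the selectors $\mathbf{s}$ such that $\mathbf{x_s} \subset \mathtt{nei}_g(\mathbf{t})$ and $\mathbf{x_s} \models f \lor \neg g$.

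\textbf{Step 3: express $\Psi$ as an existential projection.} Writing $\mathbf{h} = \mathbf{x_s}$, the condition ``$\mathbf{z} \models \mathbf{x_s}$'' is $\bigwedge_i (S_i \Rightarrow (X_i \Leftrightarrow x_i))$, where $x_i$ is the fixed value of $X_i$ in $\mathbf{x}$. Hence
\begin{equation*}
  \Psi \equiv f \land g \land \exists \mathbf{S}.\, \Big(\rho(\mathbf{S}) \land \bigwedge_i (\neg S_i \lor l_i)\Big),
\end{equation*}
where $l_i$ is the literal of $X_i$ in $\mathbf{x}$.

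Because $\mathbf{t}$ is a wAXpc, $\mathbf{t} \models f \lor \neg g$, so models of $g \land \mathbf{t}$ are models of $f$. On the region of interest, namely models of $g$ falsifying some literal of $\mathbf{t}$, we cannot drop $f$, but $f \land g$ equals $(f \lor \neg g) \land g$. The condition ``$\mathbf{z} \models \mathbf{h}$ and $\mathbf{h} \models f \lor \neg g$'' already forces $\mathbf{z} \models f \lor \neg g$, so the $f$ conjunct is redundant. We therefore only need to test consistency of
\begin{equation*}
  g \land \overline{l} \land \exists \mathbf{S}.\, \Big(\rho(\mathbf{S}) \land \bigwedge_i (\neg S_i \lor l_i)\Big).
\end{equation*}

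\textbf{Step 4: eliminate $\mathbf{S}$ pointwise.} The existential over $\mathbf{S}$ can be removed variable by variable. Given a model $\mathbf{z}$, the admissible $\mathbf{s}$ are those with $s_i = 0$ whenever $z_i \neq x_i$. So $\exists \mathbf{S}$ equals $\rho$ with $S_i$ replaced by $(S_i$ allowed only when $X_i = x_i)$. Concretely, we substitute each leaf $S_i$ of $\rho$ by the conjunction $l_i$ (selector $1$ requires $X_i = x_i$), and each leaf $\neg S_i$ by $\top$ (selector $0$ is always allowed).

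This substitution respects $v$, since $S_i$ and $X_i$ sit at the same leaf. It preserves decomposability. It may break determinism, but consistency of a conjunction only requires a structured $\mathtt{DNNF}$. We obtain a $\mathtt{SDNNF}$ $\rho'$ over $\mathbf{X}$ respecting $v$ whose models are exactly the $\mathbf{z}$ that satisfy some admissible $\mathbf{h}$. Monotonicity of $\rho$ (as in the lemma on $f^{\mathbf{x}}$) justifies that this pointwise substitution computes the projection correctly. This monotonicity argument is the step I expect to require the most care.

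\textbf{Conclusion.} Finally, we use the polynomial conjunction of two structured $\mathtt{DNNF}$ respecting the same vtree $v$ (size $O(|\rho'| \cdot |\psi|)$) to form $\rho' \land \psi$. We then condition on $\overline{l}$ for each $l \in \mathbf{t}$ and test consistency. The whole procedure runs in time polynomial in $|\varphi| \cdot |\psi|$.
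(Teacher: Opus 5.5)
Your proposal is correct and follows essentially the same route as the paper. Both compute $\mathtt{nei}_g(\mathbf{t})$ with $\psi$, condition $\varphi$ on $\delta_{\mathbf{x}}$ and on the literals outside the neighbourhood, rewrite the selector leaves as literals on $\mathbf{X}$ so the circuit respects $v$, conjoin with $\psi$, and test $\Psi \models \mathbf{t}$ literal by literal. The one variation is that you map $\neg S_i$ to $\top$, whereas the paper simply renames $X_i^{\pm}$ to the corresponding literal of $X_i$ in $\mathbf{x}$, which keeps determinism and is correct by monotonicity; your $\top$-substitution is valid, but because of decomposability (the standard DNNF forgetting/proof-tree argument), not monotonicity, which on its own only justifies the plain renaming.
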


\begin{proof}
    The decision process has the following steps:
    \begin{enumerate}
        \item Check if $\psi | \mathbf{t}$ is consistent, if not then $\mathbf{t}$ has no coverage and is not a CPI-Xp.
        \item Greedily compute the set $\mathtt{nei}_g(\mathbf{t})$ using $\psi$: for each literal $l \in \mathbf{x}$, $l \in \mathtt{nei}_g(\mathbf{t})$ iff $\psi | \mathbf{t} \cup \overline{l}$ is inconsistent.
        \item Restrict the abductive explanations represented by $\varphi$ by conditioning on all literals not in $\mathtt{nei}_g(\mathbf{t})$ (as in Theorem \ref{th:pm2x} for the literals in a model):
        \begin{equation}
            \Phi(f, g, \mathbf{t}) = \varphi | \bigwedge_{l \not\in \mathtt{nei}_g(\mathbf{t})} \overline{l^{\pm}}
        \end{equation}
        \item If we replace in $\Phi(f, g, \mathbf{t})$ occurrences of the remaining variable $X_i^+$ or $X_i^-$ in $\mathtt{var}(\mathtt{nei}_g(\mathbf{t}))$ with the variable $S_i$, we retrieve a formula equivalent to $f^{\mathbf{x}} | \bigwedge_{X_i \not\in \mathtt{var}(\mathtt{nei}_g(\mathbf{t}))} \overline{s_i}$ which represents wAXpc included in $\mathtt{nei}_g(\mathbf{t})$.
        \item However, if we replace in $\Phi(f, g, \mathbf{t})$ occurrences of $X_i^+$ (resp. $X_i^-$) by $X_i$ (resp. $\neg X_i$) if $x_i=1$ (resp. $x_i=0$), then an instance $\mathbf{x} \in \mathcal{X}$ is a model of the re-written $\Phi(f, g, \mathbf{t})$ iff it is consistent with a wAXPc in $\mathtt{nei}_g(\mathbf{t})$.
        \item We can then compute $\Psi(f, g, \mathbf{t}) = \Phi(f, g, \mathbf{t}) \land \psi$ which can be done in polynomial time since both $\Phi(f, g, \mathbf{t})$ and $\psi$ respect the vtree $v$
        \item Model of $\Psi(f, g, \mathbf{t})$ exactly correspond to extensions of a wAXPc inluded in $\mathtt{nei}_g(\mathbf{t})$ that are consistent with $\psi \equiv g$, \ie
        \begin{equation}
            \Psi(f, g, \mathbf{t}) \equiv \bigvee_{\substack{\mathbf{h} \subset \mathtt{nei}_g(\mathbf{t}), \\ \mathbf{h} \models f \lor \neg g}} \mathbf{h} \land f \land g
        \end{equation}
        \item Finally test if $\Psi(f, g, \mathbf{t}) \models \mathbf{t}$: if it does, then $\mathbf{t}$ is a CPI-Xp, if not it is not.
    \end{enumerate}
\end{proof}





\section{Compiling $f^{\pm}$} \label{sec:compilation}
Every tractability theorem proved in Section \ref{sec:hardxai} relies on the availability of a compiled representation of $f^{\pm}$ in a suitable language such as $\mathtt{d\text{-}DNNF}$, $\mathtt{dec\text{-}DNNF}$ or $\mathtt{sd\text{-}DNNF}$. Most knowledge compilers work on input formulas written as $\mathtt{CNF}$. Since the dual-rail encoding of a $\mathtt{CNF}$ can be easily represented as a $\mathtt{CNF}$ formula itself \citep{previti_prime_nodate}, existing knowledge compilers can be used as they stand.

\begin{lemma}
    Given an input $\mathtt{CNF}$, a $\mathtt{CNF}$ representation of its dual-rail encoding can be computed in linear time.
\end{lemma}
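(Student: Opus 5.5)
We construct the encoding explicitly and check that it satisfies the two defining equations of $f^{\pm}$, namely Equation \ref{eq:purity} and Equation \ref{eq:correspondance}; uniqueness of the dual-rail encoding then gives equivalence. Let $\phi = \bigwedge_{j} C_j$ be the input $\mathtt{CNF}$ over $\mathbf{X}$. For each clause $C_j$, define $C_j^{\pm}$ by replacing every positive literal $x_i$ with $x_i^+$ and every negative literal $\overline{x_i}$ with $x_i^-$. Thus $C_j^{\pm}$ is a purely positive clause over $\mathbf{X}^{\pm}$. The candidate encoding is
\begin{equation*}
    \phi^{\pm}_{\mathtt{CNF}} = \varkappa \land \bigwedge_{j} C_j^{\pm},
\end{equation*}
where $\varkappa = \bigwedge_{i} (\neg X_i^- \lor \neg X_i^+)$ contributes $n$ binary clauses. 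Each clause is rewritten literal by literal and $n$ clauses of size two are added, so the construction is linear in $|\phi| + n$. Assuming every variable of $\mathbf{X}$ occurs in $\phi$, or counting $n$ as part of the input size, this is linear.

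Purity, Equation \ref{eq:purity}, holds immediately because $\varkappa$ is a conjunct. For the correspondence, Equation \ref{eq:correspondance}, we use the standard characterisation of implicants of a $\mathtt{CNF}$. A consistent term $\mathbf{t}$ is an implicant of $\phi$ iff every clause $C_j$ contains a literal of $\mathbf{t}$. For the direction from right to left: if some clause shares no literal with $\mathbf{t}$, then each literal of $C_j$ is either on a variable outside $\mathtt{var}(\mathbf{t})$ or is the negation of a literal in $\mathbf{t}$. Setting all of them false extends $\mathbf{t}$ consistently, which gives an extension falsifying $C_j$ and hence $\phi$. The other direction is trivial.

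Next we take any pure instance $\chi$ on $\mathbf{X}^{\pm}$ and let $\mathbf{t} = \chi^{\bullet}$, which is a consistent term because $\chi$ is pure. By construction, $\chi$ satisfies the positive clause $C_j^{\pm}$ iff some $x_i^+$ or $x_i^-$ occurring in $C_j^{\pm}$ is set to $1$ in $\chi$. That is equivalent to $\mathbf{t}$ containing the corresponding literal of $C_j$. Hence $\chi \models \phi^{\pm}_{\mathtt{CNF}}$ iff $\mathbf{t}$ hits every clause of $\phi$, iff $\mathbf{t} \models \phi$ by the characterisation above. Non-pure instances falsify $\varkappa$ and are models of neither side, so the two defining conditions hold.

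The only delicate step is the implicant characterisation, specifically its converse direction (a term that misses some clause admits a falsifying extension), together with the edge case of tautological clauses containing both $x_i$ and $\overline{x_i}$. A tautological clause becomes $x_i^+ \lor x_i^- \lor \dots$, which is \emph{not} satisfied by every pure instance, for example one assigning neither variable. So we would first remove tautological clauses, which can be done in linear time and does not change $\phi$. With tautologies removed, the falsifying extension argument goes through, because no clause requires one variable to take both values.
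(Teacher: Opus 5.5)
Your construction is the same as the paper's: rename each literal $l$ to $l^{\pm}$ clause by clause, conjoin $\varkappa$, and count linear size. The difference lies in the verification, and there your proof is the more complete one.

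The paper only argues one inclusion. If $\chi \models \Sigma$, then $\chi^{\bullet}$ shares a literal with every clause and is therefore an implicant of $\Gamma$. It never shows the converse, that every implicant of $\Gamma$ yields a model of $\Sigma$.

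Your characterisation of implicants of a $\mathtt{CNF}$ supplies that missing direction: a consistent term is an implicant iff it hits every clause, proved via the falsifying-extension argument. Your remark on tautological clauses shows the step is genuinely needed. Take $\Gamma = X_1 \lor \neg X_1$. The empty term is an implicant, so the all-zero pure instance must be a model of $\Gamma^{\pm}$. Yet it falsifies the clause $X_1^+ \lor X_1^-$ of $\Sigma$. The paper's encoding as written is therefore correct only after the linear-time removal of tautological clauses that you perform.

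One cosmetic slip: the falsifying-extension argument you label the \emph{right to left} direction is in fact the contrapositive of the left-to-right direction. The content is unaffected.
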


\begin{proof}
    Given a $\mathtt{CNF}$ $\Gamma = \bigwedge_{C \in \Gamma} \bigvee_{l \in C} l$, we note:
        \begin{equation}
            \Sigma = \left( \bigwedge_{C \in \Gamma} \bigvee_{l \in C} l^{\pm} \right) \land \left( \bigwedge_{i \in [n]} \neg X_i^+ \lor \neg X_i^- \right) 
        \end{equation}

    First, $\Sigma$ is trivially in $\mathtt{CNF}$. It has a size linear in that of $\Gamma$ since every clause keeps the same size and the added constraint $\varkappa$ has a size linear in $n$. It can be computed in linear time by renaming variables in the input $\mathtt{CNF}$ $\Gamma$ and adding a linear constraint $\varkappa$.
    
    All models of $\Sigma$ are pure since $\Sigma \models \varkappa$ is straightforward.

    Finally, let us assume a model $\chi \in \mathtt{mods}(\Sigma)$ and prove that $\chi^{\bullet}$ is an implicant of $\Gamma$. For any clause $C \in \Gamma$, we have: $\chi \models \bigvee_{l \in C} l^{\pm}$ which means that $\chi$ and $\bigvee_{l \in C} l^{\pm}$ have at least a literal $\lambda = l^{\pm}$ in common. Hence, $\chi^{\bullet}$ and $C$ have at least the literal $l$ in common, which means that $\chi^{\bullet} \models C$. Since this is true for all $C \in \Gamma$, we have:
    \begin{equation}
        \chi^{\bullet} \models \bigwedge_{C \in \Gamma} C \equiv \Gamma
    \end{equation}

    Therefore, $\Sigma$ is a linear size $\mathtt{CNF}$ representation of $\Gamma^{\pm}$ and can be computed in linear time.
\end{proof}

More importantly, the $\mathtt{CNF}$ representation of the dual-rail encoding preserves most of the structure of the original $\mathtt{CNF}$.
\begin{theorem}
    Let $\Gamma$ be a $\mathtt{CNF}$ on $\mathbf{X}$ and $\Sigma$ be its dual-rail encoding in $\mathtt{CNF}$. Let $w$ be a width measure between incidence/primal tree-width/path-width, we have:
    $$ w(\Sigma) \leq 2 \cdot w(\Gamma) +1 $$
\end{theorem}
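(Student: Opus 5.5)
The plan is to take an optimal tree (resp. path) decomposition of the relevant graph of $\Gamma$ and transform it into a decomposition of the corresponding graph of $\Sigma$ by a uniform \emph{doubling} of bags. Each variable $X_i$ in a bag is replaced by the pair $\{X_i^+, X_i^-\}$, and each clause node $C$ in a bag (incidence case) is kept as the corresponding clause of $\Sigma$. The underlying tree or path is unchanged, so path decompositions stay path decompositions and the argument covers both tree-width and path-width at once.

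\textbf{Primal case.} Let $(T,(B_t)_t)$ be a decomposition of the primal graph of $\Gamma$ of width $w$. Define $B'_t=\{X_i^+,X_i^- \mid X_i\in B_t\}$, so $|B'_t|=2|B_t|\le 2(w+1)$, giving width $\le 2w+1$. I then check the three conditions.
\begin{itemize}
\item \emph{Coverage of edges.} An edge of the primal graph of $\Sigma$ comes either from a renamed clause $\bigvee_{l\in C} l^{\pm}$, joining $X_i^{\pm}$ and $X_j^{\pm}$ with $X_i,X_j$ co-occurring in $C$, or from a purity clause $\neg X_i^+\lor\neg X_i^-$. In the first case $X_i,X_j$ share a bag $B_t$, so $B'_t$ covers the edge. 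In the second case, any bag containing $X_i$ contains both copies. Every variable lies in some bag, provided isolated variables are added to a bag if needed.
\item \emph{Connectivity.} The set of nodes whose bag contains $X_i^{+}$, and likewise $X_i^-$, equals the set of nodes whose bag contains $X_i$, which is connected.
\end{itemize}

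\textbf{Incidence case.} Start from a decomposition of the incidence graph of $\Gamma$ of width $w$. Double variables as above and keep clause vertices; a bag then has size at most $2|B_t|$, again giving width $\le 2w+1$. The purity clauses $\kappa_i=\neg X_i^+\lor\neg X_i^-$ are new vertices adjacent only to $X_i^+,X_i^-$. For each $i$, attach a fresh leaf node with bag $\{X_i^+,X_i^-,\kappa_i\}$ to some node whose bag contains $X_i$. This bag has size 3, which is at most $2w+2$ whenever $w\ge 1$ (the degenerate case $w=0$ is handled separately). Incidence edges between renamed clauses and $X_i^{\pm}$ are covered exactly as in the primal argument.

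\textbf{Main obstacle: path-width in the incidence case.} Attaching leaves would break path structure. Instead, I insert $\kappa_i$ into a single bag along the path that contains $X_i$. Doing this naively can exceed the bound. To fix this, I duplicate that bag and add $\kappa_i$ to the copy only. Even so, a bag of the original size $w+1$ doubled to $2w+2$ plus $\kappa_i$ is too large. The resolution is to pick the inserted copy as one where I have room: replace, in the copy, one clause vertex, or one original pair not needed locally, by $\kappa_i$. Concretely, insert a new bag $\{X_i^+,X_i^-,\kappa_i\}\cup(B'_t\cap B'_{t'})$ between consecutive bags $t,t'$; the intersection has size $\le 2w$ when consecutive bags differ. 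This careful bookkeeping is the step I expect to require the most attention.
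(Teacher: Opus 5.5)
Your doubling construction is the paper's own argument, and for primal tree-width and path-width it is complete. For incidence graphs you correctly notice a case that the paper's one-line proof skips. The purity clauses $\kappa_i=\neg X_i^+\lor\neg X_i^-$ are new clause vertices, and each must share a bag with $X_i^+$ and with $X_i^-$, so doubling alone is not a decomposition. Your leaf bags $\{X_i^+,X_i^-,\kappa_i\}$ settle incidence tree-width. The case $w=0$ is indeed trivial for both widths, since the incidence graph of $\Sigma$ is then a disjoint union of paths $X_i^+\kappa_iX_i^-$ and isolated vertices.

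The incidence path-width step, however, is unfinished, and the bag you propose does not meet the bound as counted. A bag inserted between consecutive bags $t,t'$ must contain $B'_t\cap B'_{t'}$. Your bag $\{X_i^+,X_i^-,\kappa_i\}\cup(B'_t\cap B'_{t'})$ has up to $2w+3$ vertices (width $2w+2$) whenever $X_i\notin B_t\cap B_{t'}$. Your estimate $|B'_t\cap B'_{t'}|\le 2w$ only suffices when $X_i^{\pm}$ already lie in the intersection, which is impossible when $X_i$ occurs in a single bag.

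The missing idea is a case split, done after merging equal consecutive bags.
(a) If $X_i\in B_t\cap B_{t+1}$ for two consecutive (hence distinct) bags, then $|B_t\cap B_{t+1}|\le w$. The inserted bag $(B'_t\cap B'_{t+1})\cup\{\kappa_i\}$ then has at most $2w+1$ vertices.
(b) If $X_i$ lies in a single bag $B_t$ but occurs in some clause $C$, then $C\in B_t$, because the edge $X_iC$ must be covered there. So $B'_t$ contains an undoubled clause vertex and $|B'_t|\le 2(w+1)-1=2w+1$. You can therefore insert the bag $B'_t\cup\{\kappa_i\}$ immediately after $B'_t$.
(c) If $X_i$ occurs in no clause, $X_i^+\kappa_iX_i^-$ is a separate component. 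Delete $X_i^{\pm}$ from the bags and append the bag $\{X_i^+,\kappa_i,X_i^-\}$, which has size $3\le 2w+2$ for $w\ge 1$.

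Inserting one such bag per variable preserves the path structure and connectivity, and keeps every bag to at most $2w+2$ vertices. This completes the path-width case.
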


\begin{proof}
    Any tree-decomposition of the primal (res. incidence) hypergraph of $\Gamma$ can be turned into a tree-decomposition of the primal (res. incidence) hypergraph of $\Sigma$ by replacing every variable $X_i$ by the pair of variables $(X_i^+, X_i^-)$, which only doubles the size of the decomposition.
\end{proof}

Based on this observation, every theorem in Section \ref{sec:hardxai} that offers polynomial support for XAI queries based on a compiled representation of $f^{\pm}$ can be turned into a fixed-parameter tractable algorithm for $\mathtt{CNF}$ based on a standard width parameter.

\begin{figure}[t]
    \centering
    \begin{subfigure}[b]{0.32\textwidth}
        \centering
        \includegraphics[width=\textwidth]{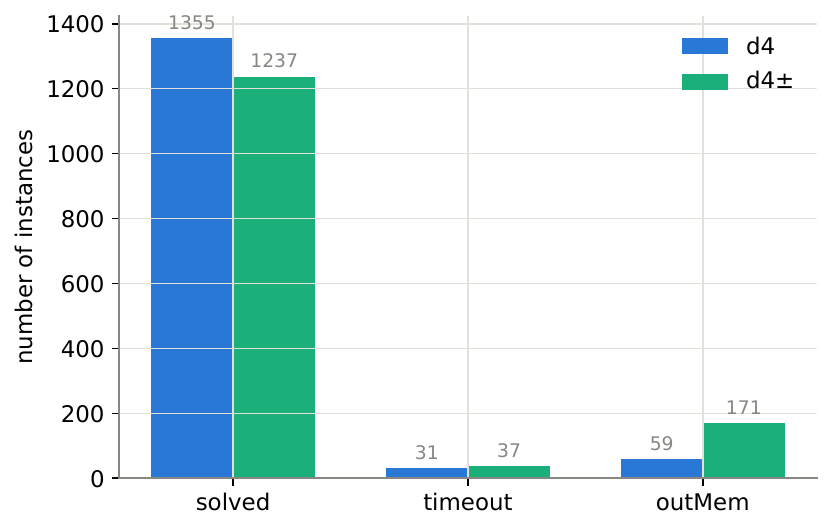}
        \caption{Number of solved instances, timeouts, and memory-outs.}
        \label{fig:outcome}
    \end{subfigure}
    \hfill
    \begin{subfigure}[b]{0.32\textwidth}
        \centering
        \includegraphics[width=\textwidth]{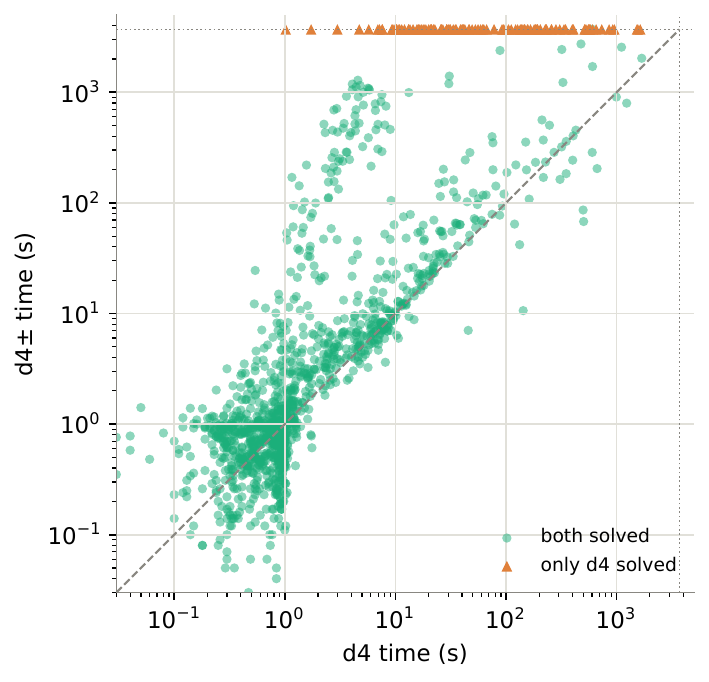}
        \caption{Compilation time (s), per instance.}
        \label{fig:time}
    \end{subfigure}
    \hfill
    \begin{subfigure}[b]{0.32\textwidth}
        \centering
        \includegraphics[width=\textwidth]{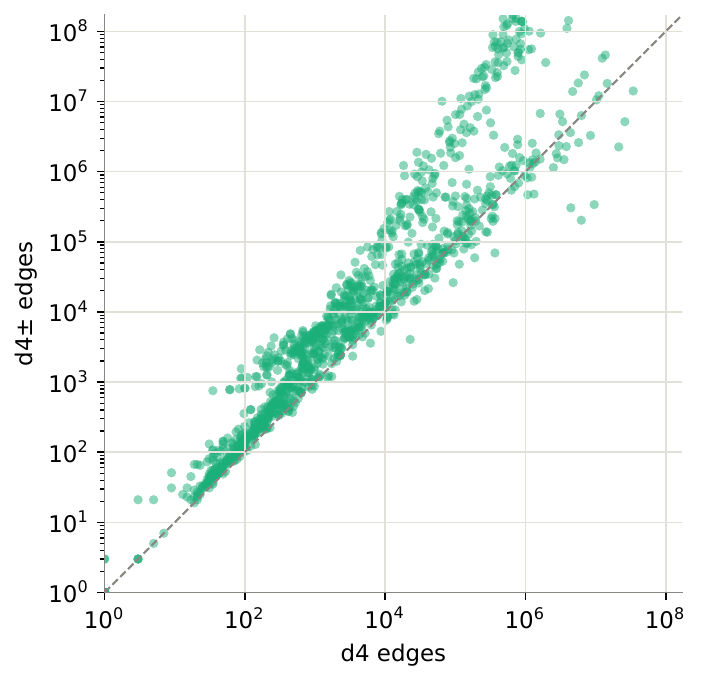}
        \caption{Size of the $\mathtt{d\text{-}DNNF}$ (number of edges).}
        \label{fig:edges}
    \end{subfigure}
    \caption{Comparison of \dfour{} (compiling the CNF) and \dfourpm{} (compiling its dual-rail encoding). In the scatter plots~(b) and~(c), each dot is an instance, the $x$-axis reports \dfour{} and the $y$-axis \dfourpm{}; instances solved only by \dfour{} are pinpointed on the plots.}
    \label{fig:xp}
\end{figure}

\section{Experiments}  \label{sec:xp}

In order to evaluate the impact of the dual-rail encoding on the compilation time and the size of a representation that allows explainability queries to be answered in polynomial time, we rely on benchmarks suited for knowledge compilation.
More precisely, our evaluation uses the benchmarks from~\cite{DBLP:conf/jelia/LagniezL25}.\footnote{\url{https://zenodo.org/records/15837216}}
All experiments were conducted on a cluster equipped with Intel\textsuperscript{\textregistered} Xeon\textsuperscript{\textregistered} {E5}-2643 v4 CPUs (3.30\,GHz) running Rocky Linux 9.5 (kernel 5.14).
For each compilation, we set a time limit of 3600 seconds and a memory limit of 32\,GB per instance.
We implement the dual-rail mechanism on top of the \dfour{} compiler~\cite{DBLP:conf/ijcai/LagniezM17}.\footnote{\url{https://github.com/jm62300/d4}}
We compare two approaches: the baseline, denoted \dfour{}, which compiles the input CNF formula directly, and our method, dual-rail \dfour{} (denoted \dfourpm{}), which first transforms the CNF using the dual-rail encoding and then compiles the resulting formula $f^{\pm}$ with \dfour{}.
To ease reproducibility, the supplementary material provides the statically compiled binary used to run the experiments, together with the raw benchmark logs and the scripts that regenerate the figures reported below.


Figure~\ref{fig:xp} summarizes our results.
Figure~\ref{fig:outcome} reports, for each approach, the number of solved instances, timeouts, and memory-outs.
\dfour{} solves $1355$ instances, with $31$ timeouts and $59$ memory-outs, whereas \dfourpm{} solves $1237$ instances, with $37$ timeouts and $171$ memory-outs.
As expected, \dfourpm{} solves fewer instances than \dfour{}: it produces a richer representation that supports queries which cannot be answered from the plain $\mathtt{CNF}$ compilation.
The main bottleneck is memory consumption, as witnessed by the substantially larger number of memory-outs.

Figure~\ref{fig:time} is a scatter plot comparing the compilation time of the two approaches, each dot being an instance, with \dfour{} on the $x$-axis and \dfourpm{} on the $y$-axis; instances solved only by \dfour{} are pinpointed.
Although \dfourpm{} is slower, handling the additional queries typically increases the compilation time by about one order of magnitude.
This overhead is paid offline, once, and is largely compensated by the fact that the resulting representation answers in polynomial time queries that are \textbf{NP}-hard without the dual-rail encoding.

Figure~\ref{fig:edges} uses the same axis convention to compare the size of the $\mathtt{d\text{-}DNNF}$ produced by the two approaches, measured as the number of edges.
The dual-rail encoding generally increases the size of the $\mathtt{d\text{-}DNNF}$, but the growth remains manageable: queries are then answered in polynomial time on this structure, whereas they cannot be handled in general on the compilation of the plain $\mathtt{CNF}$.

\section{Conclusion} \label{sec:conclusion}
The contributions of the paper are two-folds. First, we prove the hardness of a several XAI queries even for an $\mathtt{OBDD}$ representation of the classifier $f$, one of the most tractable language in the knowledge compilation map. Then, we demonstrate the potential of the dual-rail encoding for answering these queries: we proved for instance that a $\mathtt{d\text{-}DNNF}$ representation of the dual-rail can answer any algebraic counting query on implicants or abductive explanations for positive decisions. Included as special cases are finding a preferred abductive explanation with respect to quantified or stratified preferences or counting abductive explanations to compute a new Shapley value based on abduction. Moreover, we showed that a $\mathtt{dec\text{-}DNNF}$ representation of the dual-rail provides an enumeration algorithm for the sufficient reasons of positive decisions with incremental polynomial delay. Finally, sharing structure between the representations of the dual-rail and domain constraints allows us to compute coverage-based explanations.

Future research directions included a thorough experimental analysis of the runtimes of computing explanations based on a compiled dual-rail representation and a comparison with solver or oracle based solutions. In particular, the impact of compilation on enumeration delays is a promising use-case. Exploring compilation of the dual-rail for non-clausal formulas is also an interesting avenue of research.


\bibliography{biblio}

\end{document}